\documentclass{article}
\usepackage{PRIMEarxiv}

\usepackage[utf8]{inputenc}
\usepackage[T1]{fontenc}
\usepackage{hyperref}
\usepackage{url}
\usepackage{natbib} 
\usepackage{booktabs}
\usepackage{amsfonts}
\usepackage{amssymb}
\usepackage{nicefrac}
\usepackage{microtype}
\usepackage{graphicx}
\usepackage{multirow}
\usepackage{array}
\usepackage{siunitx}
\usepackage{xcolor}
\usepackage{listings}

\usepackage{amsmath}
\usepackage{amsthm}
\usepackage{bm}
\usepackage{bbm}
\usepackage{pifont}
\usepackage{enumitem}
\usepackage{caption}
\usepackage{subcaption}
\usepackage{tabularx}
\usepackage{colortbl}
\usepackage{algorithm}
\usepackage{algpseudocode}

\newtheorem{theorem}{Theorem}
\newtheorem{lemma}{Lemma}
\newtheorem{proposition}{Proposition}

\newtheorem{definition}{Definition}

\newcommand{\mansu}{\textsc{Mansu}}
\newcommand{\Df}{\mathcal{D}_f}
\newcommand{\Dr}{\mathcal{D}_r}
\newcommand{\C}{\mathcal{C}}
\newcommand{\Cbar}{\bar{\mathcal{C}}}
\newcommand{\Hmat}{\mathbf{H}}
\newcommand{\Fmat}{\mathbf{F}}
\newcommand{\dtheta}{\Delta\theta}
\newcommand{\norm}[1]{\left\|#1\right\|}

\newcommand{\ptqgap}{\Delta_{\mathrm{PTQ}}}
\newcommand{\res}[1]{\textbf{\textcolor{blue}{#1}}}
\definecolor{darkgreen}{rgb}{0.0,0.42,0.0}

\newcommand{\cmark}{\textcolor{darkgreen}{\ding{51}}}
\newcommand{\xmark}{\textcolor{red}{\ding{55}}}

\definecolor{codegreen}{rgb}{0,0.6,0}
\definecolor{codegray}{rgb}{0.5,0.5,0.5}
\definecolor{codepurple}{rgb}{0.58,0,0.82}
\definecolor{backcolour}{rgb}{0.97,0.97,0.97}

\title{\includegraphics[width=0.99\textwidth]{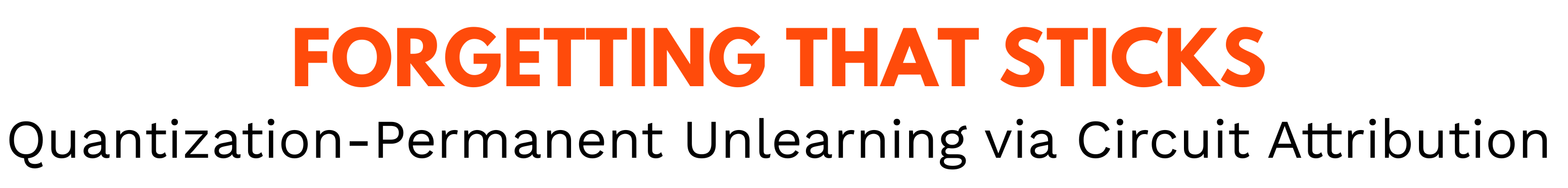}}

\author{
  Saisab Sadhu, Pratinav Seth, Vinay Kumar Sankarapu \\
  \affiliation{Lexsi Labs} \\
  \texttt{saisab.sadhu@lexsi.ai}
}

\setabstract{
Standard unlearning evaluations measure behavioral suppression in full
precision, immediately after training, despite every deployed language
model being quantized first. Recent work has shown that 4-bit
post-training quantization can reverse machine unlearning; we show this
is not a tuning artefact but a systematic \textbf{dual failure}: gradient-based
methods that achieve meaningful forgetting \emph{lose it under compression},
while methods that survive quantization barely change the model. Both
failures trace to the same root cause: across all baselines, per-parameter
updates lie 47–828× below the NF4 quantization bin width updates
diffused across billions of parameters cannot clear quantization bin
boundaries a consequence we formalize as a sparsity-permanence tradeoff.
We present \textbf{\mansu{}} (\textbf{M}echanistic-\textbf{A}ligned
\textbf{N}ull-\textbf{S}pace \textbf{U}nlearning), which resolves both
modes by combining causal circuit attribution to isolate the minimal
forget-set subgraph, circuit-restricted null-space projection with a
diagonal-Fisher retain bound, and a per-parameter magnitude floor
guaranteeing \emph{quantization survival by construction}. We additionally
introduce \textbf{Circuit Attribution Divergence (CAD)}, a mechanistic
verification metric distinguishing \emph{structural erasure} from
\emph{behavioral suppression}, a distinction existing metrics cannot make.
Across multiple model families and hazard benchmarks, \mansu{} is the
first method to jointly satisfy all four properties with margin on each
(meaningful forgetting, retain preservation, non-positive PTQ gap, and
structural erasure), while gradient-based baselines recover up to
$+0.05$ accuracy under compression.
}

\setkeywords{Machine Unlearning, Large Language Models, Model Quantization, Circuit Attribution, AI Safety}

\runningtitle{Quantization-Permanent Unlearning via Circuit Attribution}

\begin{document}
\maketitle

\section{Introduction}\label{sec:intro}
\label{sec:introduction}

Machine unlearning has become a safety-critical capability for
deployed language models: hazardous-knowledge memorisation
(biosecurity, cyberweapons, chemical synthesis) makes it
necessary~\citep{li2024wmdp}, and right-to-erasure regulations (EU AI
Act, GDPR) make it legally required~\citep{jang2023forget}.
Yet every deployed LLM today is quantized 4-bit formats (NF4,
GPTQ, AWQ) reduce memory by $4\times$ and inference cost by
$2\text{–}3\times$, making quantization the standard final step before
release. \citep{zhang2024catastrophic} (ICLR 2025) documented that
4-bit PTQ can reverse machine unlearning, reporting up to $83\%$
recovery and proposing a saliency-based mitigation (PTQ-LR/SURE);
standard evaluation practice has not yet caught up the field's
default protocol remains
behavioral metrics in full precision on a held-out forget set, measured
immediately after training. We trace the reversal phenomenon to a
structural cause (per-parameter updates systematically fall below the
NF4 bin width) and propose a method that addresses it by construction.
The assumption the standard protocol embeds behavioral suppression
in BF16 is an adequate proxy for durable knowledge removal is false,
and the failure is systematic.

\textbf{The dual failure mode:}
We apply six representative methods to Llama-3.1-8B-Instruct on
WMDP-bio~\citep{li2024wmdp} and confirm that gradient-based methods
achieve meaningful forget-set suppression in BF16. We then apply NF4
4-bit post-training quantization (the compression scheme used by the
overwhelming majority of real-world LLM deployments) and re-evaluate.
In every gradient-based method, \emph{the forgotten knowledge returns},
with PTQ recovery gaps of $+0.06$ to $+0.07$. Methods that survive
quantization do so only by barely changing the model: across $94$
non-\mansu{} experiments (Table~\ref{tab:app_sweep_summary}),
preference-optimization and null-space methods reduce forget-set
accuracy by $1.6$ pp on average, within measurement variance on a
four-way MCQ task.
The pattern holds on Qwen-3-8B and on MUSE open-ended memorization,
ruling out model- or benchmark-specific explanations.

\textbf{The structural cause:}
Both failure modes share one origin. Every existing method distributes
gradient updates across all $d$ parameters. For Llama-3.1-8B
($d \approx 8\times10^9$), even a large-norm gradient induces
per-parameter changes of order $10^{-6}$, far below the NF4
quantization bin width of $\approx 8.4\times10^{-4}$. At compression
time, these changes round to zero. Methods that avoid this by
constraining updates to remain near the original model do so at the
cost of meaningful forgetting. This is \emph{not a hyperparameter problem};
it is a necessary consequence of applying any gradient-based objective
uniformly across billions of parameters
(Proposition~\ref{prop:tradeoff}).

\textbf{The fix:}
Mechanistic interpretability has established that specific factual
knowledge is causally localized in sparse, identifiable subgraphs of
the model's
computation~\citep{meng2022rome,elhage2022superposition,syed2023attribution}.
If knowledge resides in $|\C|$ parameters rather than all $d$,
concentrating updates into $\C$ amplifies per-parameter magnitudes by
$\sqrt{d/|\C|}$. With an explicit magnitude floor, \emph{quantization survival becomes a
construction-time guarantee}. Null-space projection restricted
to $\C$ yields a retain-set loss bound provably tighter than global
projection by the Cauchy interlace theorem.

We present \mansu{} (\textbf{M}echanistic-\textbf{A}ligned
\textbf{N}ull-\textbf{S}pace \textbf{U}nlearning), which operationalizes
this insight: (1)~EAP-IG~\citep{hanna2024faithfulness} identifies the
minimal circuit $\C$ causally responsible for forget-set answers;
(2)~gradient updates within $\C$ are projected into the null space of
the retain-set Fisher Information, with a tighter bound proved in
Theorem~\ref{thm:retain}; and (3)~every cumulative update below the NF4
bin size is rescaled to the floor, guaranteeing quantization survival by
construction (Lemma~\ref{lem:quant}). On Llama-3.1-8B-Instruct /
WMDP-bio, \mansu{} achieves a PTQ gap of $-0.040$ while preserving MMLU
within $0.030$ of the zero-shot model: NF4 \emph{amplifies} rather than
reverses the erasure (Proposition~\ref{prop:amplification}).
Results replicate on Qwen-3-8B and MUSE.

\textbf{Contributions:}
(I)~\textbf{Dual failure documentation:} the first systematic evidence
that no existing method achieves both meaningful forgetting and
quantization permanence, across $94$ non-\mansu{} experiments
($84$ WMDP cells from the family-wise sweep in
Table~\ref{tab:app_sweep_summary} plus $10$ MUSE cells in
Table~\ref{tab:main}) over three model families, three hazard domains,
and two benchmarks.
(II)~\textbf{\mansu{}}: a three-component method with formal guarantees,
tighter retain bound (Theorem~\ref{thm:retain}), construction-time
quantization survival (Lemma~\ref{lem:quant}), and a sparsity-permanence
tradeoff analysis (Proposition~\ref{prop:tradeoff}); full proofs in
Appendix~\ref{app:proof}.
(III)~\textbf{Circuit Attribution Divergence (CAD):} the first post-hoc
mechanistic verification protocol distinguishing structural knowledge
deletion from behavioral suppression, a distinction standard behavioral
metrics cannot make (Section~\ref{sec:cad}).


\section{Background and Related Work}
\label{sec:related}
\begin{table}[pt]
\caption{\textbf{Prior work against the four unlearning requirements}
(Section~\ref{sec:problem}).
\textbf{F}~=~forget; \textbf{R}~=~retain; \textbf{Q}~=~quant.-permanent
($\ptqgap \leq 0$); \textbf{S}~=~structural erasure (CAD~$\gg 0$).
\cmark~satisfies; \xmark~fails; $\sim$~partial.
Methods($\dagger$) are evaluated in our experiments.}
\label{tab:related}
\centering
\footnotesize
\setlength{\tabcolsep}{4pt}
\begin{tabularx}{\linewidth}{@{}llccccX@{}}
\toprule
Method & Family & \textbf{F} & \textbf{R} & \textbf{Q} & \textbf{S} & Key limitation \\
\midrule
GA$^\dagger$~\citep{jang2023forget}
  & Grad.\ ascent & \cmark & $\sim$ & \xmark & \xmark
  & Updates diffuse over $d$ params; per-param $\!\ll\! \delta_i$ \\
Surgical GA$^\dagger$~\citep{jang2023forget}
  & Grad.\ ascent & \cmark & $\sim$ & \xmark & \xmark
  & Layer restriction reduces diffusion but cannot reach $\delta_i$ \\
NPO$^\dagger$~\citep{zhang2024npo}
  & Pref.\ opt. & $\sim$ & \cmark & \cmark & \xmark
  & Frozen reference prevents updates $\geq \delta_i$ \\
SimNPO$^\dagger$~\citep{liu2024simnpo}
  & Pref.\ opt. & \xmark & \cmark & \cmark & \xmark
  & Same frozen-anchor problem as NPO \\
GU+SimNPO$^\dagger$~\citep{huang2024unified,liu2024simnpo}
  & Null-space + Pref. & \xmark & \cmark & \cmark & \xmark
  & Global projection reinstates diffusion problem \\
LUNAR$^\dagger$~\citep{lunar2025}
  & Repr.\ steer. & $\sim$ & \cmark & \xmark & \xmark
  & Edits non-circuit MLP projection; PTQ gap $\approx+0.01$ (Table~\ref{tab:main}) \\
PTQ-LR~\citep{zhang2024catastrophic}
  & Quant.-aware & \cmark & $\sim$ & $\sim$ & \xmark
  & Raises LR; retain constraint re-bounds max useful LR \\
\midrule
\mansu{} (ours)
  & Circuit + floor & \cmark & \cmark & \cmark & \cmark
  & First method jointly satisfying all four with margin on each \\
\bottomrule
\end{tabularx}
\end{table}

Machine unlearning methods can be grouped into five method families
(gradient ascent, preference optimization, null-space projection,
representation steering, quantization-aware optimization);
Table~\ref{tab:related} summarizes each against the four requirements
of Section~\ref{sec:problem}.

\textbf{Gradient ascent and variants}~\citep{jang2023forget,liu2022continual}
maximize forget-set loss directly. These methods are simple and
effective in full precision, but updates distribute over all $d$
parameters, pushing per-parameter magnitudes far below quantization bin
widths. Surgical variants~\citep{jang2023forget} reduce the active
parameter count but cannot reach the bin threshold without violating
the retain constraint (Proposition~\ref{prop:tradeoff}).

\textbf{Preference optimization} (NPO~\citep{zhang2024npo},
SimNPO~\citep{liu2024simnpo}) adapts DPO~\citep{rafailov2024dpo} to
treat forget-set responses as dis-preferred. The frozen reference model
prevents output collapse and incidentally prevents large per-parameter
updates, giving good retain scores but negligible structural change.
TOFU~\citep{maini2024tofu} and MUSE~\citep{shi2024muse} are
benchmark suites for preference-optimized unlearning, on
fictitious-author facts and open-ended memorization respectively; we
evaluate on MUSE alongside the WMDP hazard splits.

\textbf{Null-space projection}~(GU,~\citealp{huang2024unified}) projects
gradient updates onto the null space of the retain Hessian, giving a
formal retain-safety bound. Because the projection is \emph{global},
the diffusion problem is reinstated. \mansu{} inherits the projection
idea and proves a strictly tighter bound by restricting both the update
and the projection to the causally identified circuit
(Theorem~\ref{thm:retain}).

\textbf{Representation steering} (LUNAR~\citep{lunar2025},
RMU~\citep{li2024wmdp}) suppresses forget-set outputs by redirecting
activations at inference time. LUNAR trains only a single MLP
down-projection outside the EAP-IG forget circuit; RMU randomises
forget-set activations without weight edits. In both cases the causal
knowledge circuit is left intact, so the unlearned model passes
behavioural metrics while CAD remains $\approx 0$ the failure mode
CAD is designed to expose. We include LUNAR in our experiments and
discuss RMU as a methodologically adjacent baseline.

\textbf{Quantization robustness:}
\citep{zhang2024catastrophic} (ICLR 2025) document that 4-bit PTQ can
catastrophically reverse unlearning, reporting up to $83\%$ recovery
and proposing a saliency-based unlearning strategy with a large
learning rate (``PTQ-LR'' in Table~\ref{tab:related}) as mitigation.
We show (Proposition~\ref{prop:tradeoff}) that the retain constraint
independently caps the useful learning rate, so the root cause
remains unaddressed. Our magnitude-floor constraint solves the
problem at its source.

\textbf{Mechanistic interpretability and knowledge localization:}
ROME~\citep{meng2022rome} and MEMIT~\citep{meng2023memit} established
via causal patching that factual associations are stored in middle MLP
layers; EAP-IG~\citep{hanna2024faithfulness} extends this to
circuit-level attribution across the full computation graph.
Concurrently, \citep{kasliwal2026circuit} apply circuit-restricted
weight arithmetic to embed \emph{refusal} directly into checkpoints
without inference-time hooks. Our work applies the same localization
principle to \emph{unlearning} and adds the orthogonal constraint of
quantization permanence, which that setting does not require.
\citep{lee2025negative} and \citep{guo2025mechanistic} raise concerns
that attribution-based circuits do not reliably predict unlearning
targets; Ablation~C(i) tests this claim directly on the
factual-recall benchmarks studied here and finds a substantial CAD
advantage ($1.143$ vs $0.743$) for the causally identified circuit
over a random same-size baseline at matched forget depth. Extended
discussion is in Appendix~\ref{app:related}.

\section{Problem Formulation}
\label{sec:problem}
\begin{figure}[!t]
\centering
\begin{minipage}[t]{0.27\textwidth}
  \centering
  \includegraphics[width=\linewidth]{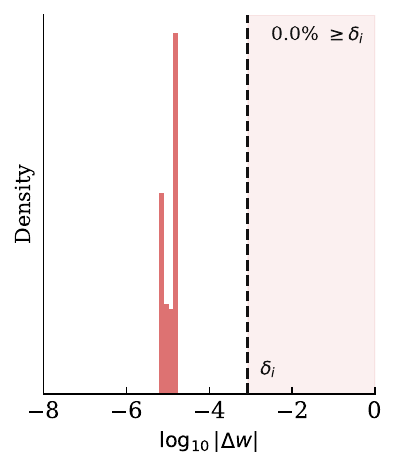}
\end{minipage}
\hfill
\begin{minipage}[t]{0.27\textwidth}
  \centering
  \includegraphics[width=\linewidth]{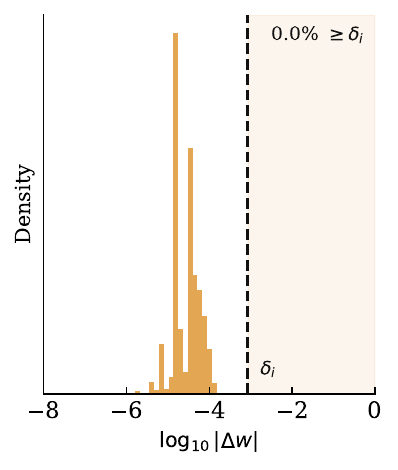}
\end{minipage}
\hfill
\begin{minipage}[t]{0.27\textwidth}
  \centering
  \includegraphics[width=\linewidth]{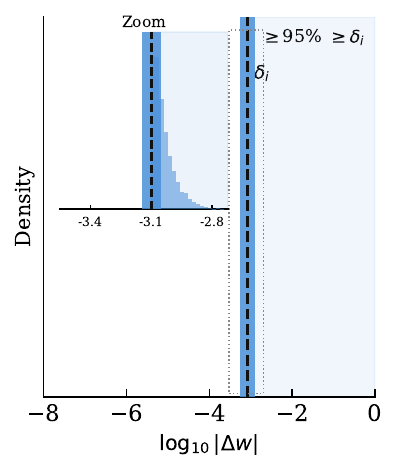}
\end{minipage}
\caption{\textbf{Per-parameter update magnitudes} (Llama-3.1-8B / WMDP-bio).
Histograms of $\log_{10}|\Delta w|$.
\textbf{(a)}~Global GA: diffuse, far below $\delta_i$.
\textbf{(b)}~Surgical GA: concentrated on L14–16, still below $\delta_i$.
\textbf{(c)}~\mansu{}: clamped at or above $\delta_i$ by construction.
Dashed line: NF4 bin width $\delta_i = 8.4\!\times\!10^{-4}$; updates to
its left round to zero under 4-bit quantization, so only \mansu{}'s
erasure survives (Lemma~\ref{lem:quant}).}
\label{fig:main_panel}
\end{figure}

Let $\theta \in \mathbb{R}^d$ be a pretrained LM's parameters, $\Df$ the forget set, $\Dr$ the retain set.

We seek $\dtheta$ with $\theta' = \theta + \dtheta$ satisfying four properties:
\textit{(i)~forget}: $\theta'$ fails on $\Df$ by a meaningful margin;
\textit{(ii)~retain}: performance on $\Dr$ and general benchmarks
within 2 pp of $\theta$;
\textit{(iii)~quantization permanence}: $Q_4(\theta')$ also fails on
$\Df$, where $Q_4$ is the deployment 4-bit quantizer;
\textit{(iv)~structural erasure}: re-running causal attribution on
$\theta'$ shows the subgraph implementing forget-set knowledge has collapsed, not merely been bypassed.
Properties (i) and (ii) are standard; (iii) and (iv) are not, and no existing method satisfies both.

\begin{definition}[NF4 quantization floor]
\label{def:floor}
Under NF4 quantization~\citep{dettmers2023qlora} with per-channel scale
$s_i$ and codebook levels $\{q_k\}_{k=0}^{15}$, the smallest bin
width for parameter $i$ is
$\delta_i = s_i \cdot \min_{k} |q_k - q_{k-1}|$.
For Llama-3.1-8B MLP weights $\delta_i \approx 8.4\times10^{-4}$
(derivation in Appendix~\ref{app:nf4}).
\end{definition}

\begin{proposition}[Sparsity–permanence tradeoff]
\label{prop:tradeoff}
Under gradient ascent with retain constraint
$\mathcal{L}_r(\theta+\dtheta)-\mathcal{L}_r(\theta)\leq\epsilon_r$,
the per-parameter update magnitude when $|\C|$ parameters are updated
(all others frozen) satisfies
\begin{equation}
\norm{\dtheta_i}
\;\leq\;
\sqrt{\frac{2\epsilon_r}{|\C|\,\bar{F}_\C}},
\qquad \bar{F}_\C \;=\; \frac{1}{|\C|}\sum_{j\in\C}[\Fmat_r]_{jj},
\label{eq:diffusion}
\end{equation}
where $[\Fmat_r]_{jj}=\mathbb{E}_{(x,y)\sim\Dr}[(\partial\log p_\theta(y|x)/\partial\theta_j)^2]$
is the empirical diagonal Fisher of the retain loss
(Appendix~\ref{app:proof}; the diagonal Fisher remains well-defined
under rank-deficient $\Hmat_r$, unlike the standard $\sigma_{\min}$ form).
For Llama-3.1-8B ($d=8.03\!\times\!10^9$, $\epsilon_r=0.02$,
$\bar{F}_\C\!\sim\!10^{0}$), the global case
($|\C|=d$) gives
$\norm{\dtheta_i}\!\lesssim\!2.2\!\times\!10^{-6}$, roughly
$380\times$ below $\delta_i$. Updates reach $\delta_i$ only when
$|\C|/d \leq 7\!\times\!10^{-6}$ (fewer than $0.001\%$ of parameters).
\end{proposition}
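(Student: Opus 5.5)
The approach is a second-order Taylor expansion of the retain loss about $\theta$, with the retain Hessian replaced by its Fisher approximation. I would then read the bound off a quadratic form restricted to the coordinates in $\C$.

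\textbf{Local expansion.} Write
\[
\mathcal{L}_r(\theta+\dtheta)-\mathcal{L}_r(\theta)=\nabla\mathcal{L}_r(\theta)^\top\dtheta+\tfrac12\dtheta^\top\Hmat_r\dtheta+O(\norm{\dtheta}^3).
\]
Two standing assumptions make this usable:
\begin{itemize}
\item $\theta$ is a (near-)stationary point of the retain loss, since the pretrained model is already fit to $\Dr$-like data. The linear term then vanishes, or can be absorbed into $\epsilon_r$.
\item $\Hmat_r\approx\Fmat_r$, the Gauss--Newton/Fisher identity at a well-fit model.
\end{itemize}
Since $\dtheta$ is supported on $\C$, only the principal submatrix $[\Fmat_r]_{\C\C}$ enters.

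\textbf{Diagonal step and symmetry.} Keep the diagonal of that submatrix, which is the diagonal-Fisher surrogate the statement itself adopts. This is well-defined even when $\Hmat_r$ is rank-deficient, because each $[\Fmat_r]_{jj}$ is an expectation of squares and hence nonnegative. The constraint becomes
\[
\tfrac12\sum_{j\in\C}[\Fmat_r]_{jj}\,\dtheta_j^2\le\epsilon_r .
\]
Now ask for the largest achievable \emph{uniform} per-parameter magnitude, since uniform is the GA-diffusion regime the proposition describes. Set $|\dtheta_j|=m$ for all $j\in\C$. This gives $\tfrac12 m^2\sum_{j\in\C}[\Fmat_r]_{jj}=\tfrac12 m^2|\C|\bar F_\C\le\epsilon_r$, which is exactly $m\le\sqrt{2\epsilon_r/(|\C|\bar F_\C)}$. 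For a non-uniform update the same bound holds for the RMS magnitude via Cauchy--Schwarz/Jensen when the Fisher weights are comparable. I would state the result in that form, so that ``per-parameter magnitude'' means the typical (RMS) coordinate rather than the maximum. A single coordinate could otherwise absorb the whole budget.

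\textbf{Numerics.} Plug in $d=8.03\times10^9$, $\epsilon_r=0.02$ and $\bar F_\C=1$. This gives $\sqrt{0.04/8.03\times10^9}\approx2.2\times10^{-6}$, and dividing $\delta_i=8.4\times10^{-4}$ by it gives roughly $380$. Solving $\sqrt{2\epsilon_r/(|\C|\bar F_\C)}\ge\delta_i$ gives $|\C|\le 0.04/(8.4\times10^{-4})^2\approx5.7\times10^4$. As a fraction of $d$ this is $\approx7\times10^{-6}$, matching the stated threshold.

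\textbf{Main obstacle.} The hard step is justifying the passage from the true retain-loss change to the diagonal-Fisher quadratic form. Three things need care:
\begin{itemize}
\item dropping the gradient term;
\item replacing $\Hmat_r$ by $\Fmat_r$;
\item discarding off-diagonal Fisher entries, which can make the true quadratic form either larger or smaller than its diagonal.
\end{itemize}
I would handle this by stating the proposition explicitly under the diagonal-Fisher local model, as the statement's own phrasing permits. Alternatively, if $[\Fmat_r]_{\C\C}\succeq c\,\mathrm{diag}([\Fmat_r]_{\C\C})$ for some $c>0$, the same argument goes through with $\bar F_\C$ replaced by $c\bar F_\C$, losing only a constant factor in the bound. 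The higher-order remainder is negligible because the admissible $\dtheta$ are of order $10^{-6}$ per coordinate.
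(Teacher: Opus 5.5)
There is no written proof in the paper to compare against. The pointer to Appendix~\ref{app:proof} leads only to proofs of Theorem~\ref{thm:retain}, Proposition~\ref{prop:approx}, Lemma~\ref{lem:quant} and Proposition~\ref{prop:amplification}. The derivation the statement implies is a quadratic retain model, with the diagonal Fisher standing in for the $\sigma_{\min}$ form and the update spread evenly over $\C$. That is exactly your uniform-magnitude computation. Your numbers are correct: $2.2\times10^{-6}$, ratio $\approx 380$, $|\C|\approx 5.7\times10^{4}$, and $|\C|/d\approx 7\times10^{-6}$. You are also right that the inequality cannot hold for every coordinate. Under $\tfrac12\sum_{j\in\C}[\Fmat_r]_{jj}\dtheta_j^2\le\epsilon_r$, a single coordinate can reach $\sqrt{2\epsilon_r/[\Fmat_r]_{ii}}$, so ``per-parameter magnitude'' has to be reinterpreted.

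The step that fails is your RMS extension, which is the form you say you would state. Neither Cauchy--Schwarz nor Jensen turns $\sum_{j\in\C}[\Fmat_r]_{jj}\dtheta_j^2$ into $\bar{F}_\C\sum_{j\in\C}\dtheta_j^2$. A coordinate with $[\Fmat_r]_{jj}=0$ is unconstrained by the quadratic form, so the unweighted RMS can be arbitrarily large while $\bar{F}_\C$ stays of order one. This case is not exotic here, because \mansu{}'s Phase~2 mask deliberately routes updates onto low-Fisher coordinates. Moreover, $\sigma_{\min}([\Fmat_r]_{\C\C})\le\min_{j\in\C}[\Fmat_r]_{jj}\le\bar{F}_\C$, so replacing $\sigma_{\min}$ by $\bar{F}_\C$ moves the bound in the unsafe direction. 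What holds with no extra hypothesis is the bound for the Fisher-weighted mean square:
\[
m_F^2 \;:=\; \frac{\sum_{j\in\C}[\Fmat_r]_{jj}\,\dtheta_j^2}{\sum_{j\in\C}[\Fmat_r]_{jj}} \;\le\; \frac{2\epsilon_r}{|\C|\,\bar{F}_\C}.
\]
For the unweighted RMS you need $\min_{j\in\C}[\Fmat_r]_{jj}$ in place of $\bar{F}_\C$. Your ``comparable weights'' hypothesis $[\Fmat_r]_{jj}\ge c\,\bar{F}_\C$ gives the stated bound only up to a factor $c^{-1/2}$, not the same bound.

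Separately, you do not need stationarity to drop the linear term. It suffices that $\nabla\mathcal{L}_r(\theta)^\top\dtheta\ge 0$, meaning the ascent step does not improve the retain loss to first order. Then the quadratic term alone is at most $\epsilon_r$, up to the cubic remainder.
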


\textbf{Implications.}
\emph{First}, no existing gradient-based method operates near this
threshold: Surgical GA's $6.6\%$ circuit and even \mansu{}'s $3.2\%$
both sit more than three orders of magnitude above it
($\approx\!4500\times$ for \mansu{}, $\approx\!9400\times$ for Surgical
GA; cf.\ Surgical GA's $+0.027$ PTQ gap, Table~\ref{tab:main}), so
localization alone is insufficient and the magnitude floor
(Section~\ref{sec:method}) is required to close the gap by construction.
\emph{Second}, Proposition~\ref{prop:tradeoff} says nothing about
\emph{which} parameters to update: arbitrary concentration damages
retain performance, so the circuit must be chosen causally.

\textbf{Second failure mode.}
Preference-optimization methods (NPO, SimNPO, GU+SimNPO) avoid the
floor problem differently: the frozen-reference KL constrains updates
to be so small that $|\dtheta_i|\!\ll\!\delta_i$ almost everywhere.
At standard hyperparameters this leaves forget accuracy largely
intact across our $94$-experiment sweep, the mean forget-set
reduction for these methods is $1.6$ pp on capable models
(behaviorally invisible erasure). Pushing the methods harder (as in
our main-table runs on Llama-3.1-8B) does move forget accuracy, but
diffuses the now-larger update across $d$ parameters: forget drops
($0.230$–$0.250$) come paired with collapsed MMLU
($0.200$–$0.295$) targeted erasure is replaced by global utility
damage (Section~\ref{sec:results}).
\begin{figure}[pt]
\centering
\includegraphics[width=0.99\linewidth]{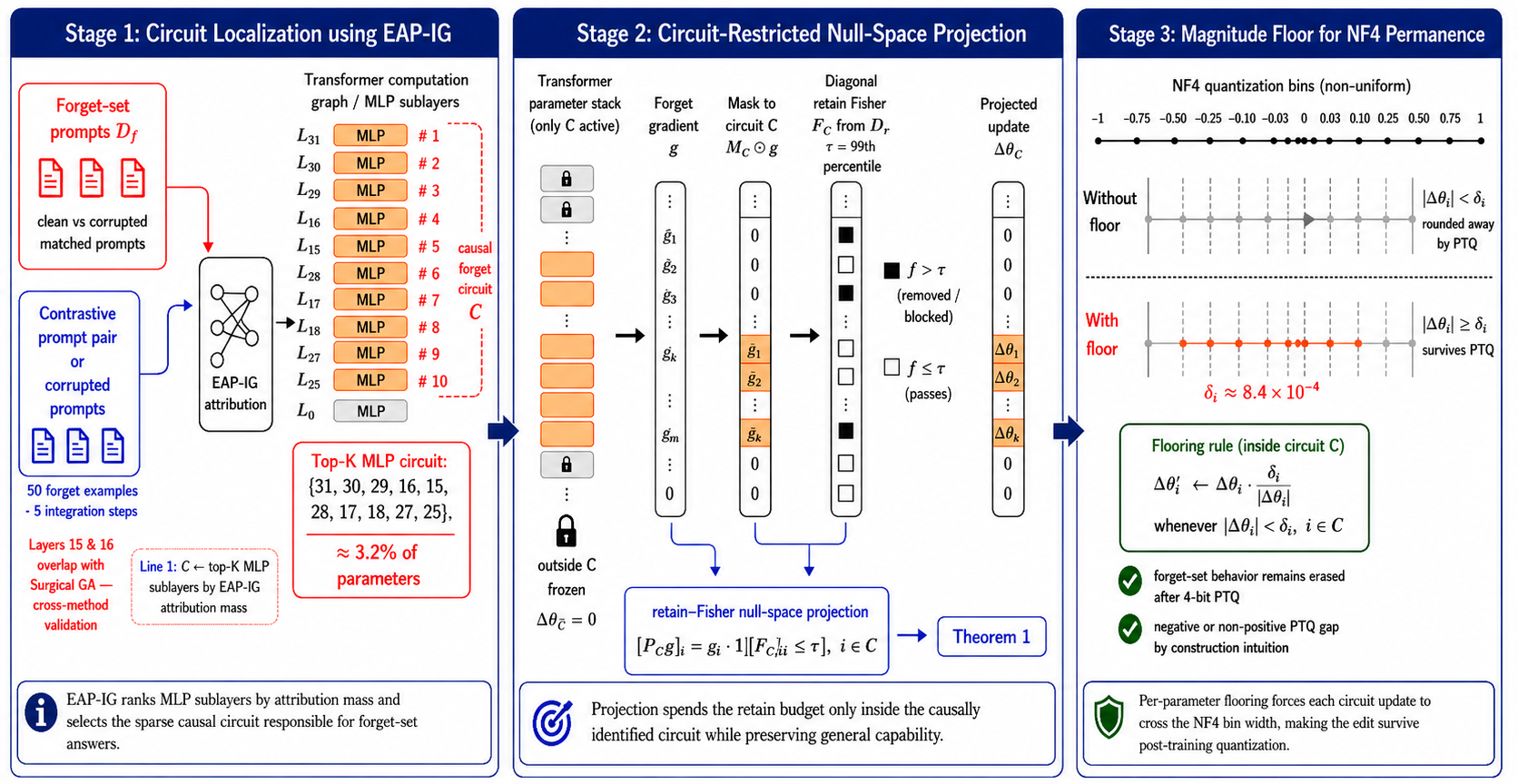}
\caption{\textbf{\mansu{} three-phase pipeline.}
\emph{Phase~1 (Localize)}: EAP-IG causal attribution identifies the minimal MLP circuit $\C$ causally responsible for forget-set answers.
\emph{Phase~2 (Project)}: Updates restricted to $\C$ are projected into the null space of the circuit-restricted retain Fisher $\Fmat_\C$ (Theorem~\ref{thm:retain}).
\emph{Phase~3 (Floor)}: A per-parameter magnitude floor $\delta_i$ rescales each update to clear the nearest NF4 bin boundary by construction (Lemma~\ref{lem:quant}).}
\label{fig:pipeline}
\end{figure}

\section{Method}
\label{sec:method}
Both failure modes share a root cause: gradient updates distributed
over parameters with no causal role in the targeted knowledge.
\mansu{} corrects this in three phases (Figure~\ref{fig:pipeline}; full
procedure in Algorithm~\ref{alg:mansu}); derivations are in
Appendix~\ref{app:method}.

\textbf{Phase 1: Localize (Appendix~\ref{app:eapig}).}
EAP-IG~\citep{hanna2024faithfulness} runs path-integrated gradients on
the logit difference between clean and corrupted forget-set prompts,
attributing causal contribution to each edge of the transformer graph.
Aggregating over $50$ forget examples and ranking MLP sublayers by
total incoming attribution mass yields the top-$10$ circuit:
\begin{equation}
\C_{\mathrm{MLP}} =
\{30,14,31,19,29,15,20,16,21,17\},
\label{eq:circuit}
\end{equation}
covering $\approx 3.2\%$ of parameters (effective post-Phase-2/3
fraction; per-stage breakdown in Appendix~\ref{app:phase3_details}).
The top-$5$ prefix $\{30,14,31,19,29\}$ is the canonical $k\!=\!5$
configuration used in Tables~\ref{tab:app_hyperparams_llama} and
\ref{tab:app_circuit_size}. Layer~14 appears in both the EAP-IG top-$K$ circuit and
surgical GA's L14–16 selection, providing partial cross-method
agreement; upper layers \{29,30,31\} dominate the attribution
ranking, consistent with ROME's finding that later MLP layers
store factual associations~\citep{meng2022rome}.

\textbf{Phase 2: Project (Appendix~\ref{app:method}).}
Gradient updates within $\C$ are masked along high-Fisher coordinates,
an approximation to projection into $\ker(\Hmat_{\C\C})$ under the
diagonal-Fisher assumption (approximation error bounded in
Proposition~\ref{prop:approx}):
\begin{equation}
[P_\C\,g]_i = g_i \cdot
\mathbbm{1}\!\left[[\Fmat_\C]_{ii} \leq \tau\right],
\quad i \in \C; \qquad
\dtheta_{\Cbar} = 0,
\label{eq:projector}
\end{equation}
where $\tau$ is the $99$th-percentile Fisher threshold and all
parameters outside $\C$ are frozen. Restricting projection to $\C$
yields a provably tighter retain bound than projecting globally
(Theorem~\ref{thm:retain}).

\textbf{Phase 3: Floor (Appendix~\ref{app:nf4}).}
After training converges (best checkpoint by lowest forget accuracy
subject to MMLU drop $\leq 0.08$), the magnitude floor is applied
post-hoc to the saved checkpoint: for each $i\in\C$, the cumulative
update $\dtheta_i = \theta_i - \theta_i^{(0)}$ is rescaled to clear
the nearest NF4 bin boundary while preserving direction:
\begin{equation}
\dtheta_i \;\leftarrow\;
\dtheta_i \cdot \frac{\delta_i}{|\dtheta_i|}
\qquad \text{whenever } 0 < |\dtheta_i| < \delta_i,\; i \in \C.
\label{eq:floor}
\end{equation}
By Lemma~\ref{lem:quant} this guarantees
$Q_4(\theta_i^{(0)} + \dtheta_i) \neq Q_4(\theta_i^{(0)})$
for every $i \in \C$, so the update is permanent under quantization.
The implementation uses a per-tensor approximation of $\delta_i$
that agrees with Definition~\ref{def:floor} to within an order of
magnitude (Appendix~\ref{app:phase3_details}).

\textbf{Training objective.}
The three constraints are encoded jointly:
\begin{equation}
\min_{\substack{
  \dtheta_\C \in P_\C(\mathbb{R}^{|\C|}) \\
  |\dtheta_i| \geq \delta_i\;\forall i\in\C \\
  \dtheta_{\Cbar} = 0
}}
\;
-\mathcal{L}_f(\theta + \dtheta)
\;+\;
\lambda\,D_{\mathrm{KL}}\!\left(p_{\theta^{(0)}} \,\big\|\,
p_{\theta+\dtheta}\right)_{x \sim \Dr}.
\label{eq:objective}
\end{equation}
The frozen-reference KL (following NPO/GU) prevents retain collapse.
Hyperparameters and the rationale for full-parameter (not LoRA)
training are in Appendix~\ref{app:method}.

\subsection{Circuit Attribution Divergence (CAD)}
\label{sec:cad}
\textbf{Motivation.} Two unlearned checkpoints with identical forget-set
accuracy can differ in mechanism: in $\theta'_A$ the knowledge circuit
has been dismantled; in $\theta'_B$ the circuit is intact and a
downstream layer redirects its output to a refusal token (LUNAR-style).
Both pass behavioral evaluations, but $\theta'_B$ is fragile to small
fine-tunes, re-prompts, or quantization. \emph{Behavioral metrics measure
outputs; unlearning is a claim about weights.}

\textbf{Definition.} Let $E(\C)$ be the EAP-IG edge set on the original
$\theta$ with attribution score $s_e(\theta)$ for edge $e$
(Appendix~\ref{app:eapig}). Re-run EAP-IG on the unlearned $\theta'$ and
compare:
\begin{equation}
\mathrm{CAD}(\C, \Df;\, \theta, \theta') \;=\;
\frac{\sum_{e \in E(\C)} |s_e(\theta) - s_e(\theta')|}
     {\sum_{e \in E(\C)} |s_e(\theta)|}.
\label{eq:cad}
\end{equation}
$\mathrm{CAD}\!\to\!0$ means the circuit is intact (behavior may have
changed only via downstream redirection); $\mathrm{CAD}\!\approx\!1$
means it has been dismantled; values $>1$ indicate sign-flipped
redirection (also structural).

\textbf{Properties.} CAD is
(i) computed entirely on the unlearned weights with no held-out probes;
(ii) $\approx 0$ by construction for inference-time redirection
(LUNAR/RMU);
(iii) insensitive to spurious behavioral suppression (a refuse-everything
model yields $\mathrm{CAD}\!\approx\!0$);
(iv) not satisfied by random weight perturbation the random-circuit
control (Ablation~C(i)) collapses CAD by $\sim\!35\%$ relative to the
EAP-IG circuit ($1.143\!\to\!0.743$ on WMDP-bio);
(v) CAD alone does not certify structural erasure high CAD with
elevated AS-NC indicates broad representational damage rather than
localized circuit dismantling. The joint diagnostic is high CAD
\emph{and} low AS-NC (companion metric below); a worked
SimNPO/MUSE example illustrating this distinction is in
Appendix~\ref{app:ablation_extended}.

\textbf{Companion metrics: AS-C, AS-NC.} Activation-level checks inside
/ outside $\C$ (Eq.~\ref{eq:as}). Structural erasure requires high CAD
\emph{and} the concentration gap AS-C\,$\ll$\,CAD, which is present only
for localized methods; for global baselines AS-C\,=\,CAD numerically
(Table~\ref{tab:main_struct}). Full diagnostic discussion is in
Appendix~\ref{app:ablation_extended}.

\begin{algorithm}[htpb]
\caption{\mansu{} \emph{(Mechanistic-Aligned Null-Space Unlearning)}}
\label{alg:mansu}
\footnotesize
\begin{algorithmic}[1]
\Require pretrained $\theta^{(0)}$, forget set $\Df$, retain set $\Dr$, circuit size $K$, KL weight $\lambda$, floor $\delta_i$
\State $\C \gets$ top-$K$ MLP sublayers by EAP-IG attribution mass on $\Df$ \Comment{Phase~1: Localize}
\State $[F_\C]_{ii} \gets \mathbb{E}_{\Dr}\!\left[(\partial \log p_{\theta^{(0)}}/\partial \theta_{\C,i})^2\right]$;\;
$\tau \gets$ 99th percentile of $[F_\C]_{ii}$
\State $\theta \gets \theta^{(0)}$
\For{$t = 1,\ldots, T$} \Comment{Phase~2: training loop}
  \State $g \gets -\nabla_\C \mathcal{L}_f(\theta) + \lambda\,\nabla_\C D_{\mathrm{KL}}\!\left(p_{\theta^{(0)}}\,\|\, p_\theta\right)_{\Dr}$
  \State $\hat{g}_i \gets g_i \cdot \mathbbm{1}[[F_\C]_{ii} \leq \tau]$ \Comment{Phase~2: project (Fisher mask)}
  \State $\theta_\C \gets \theta_\C - \eta\,\hat{g}$;\quad $\theta_{\Cbar}$ frozen
\EndFor
\For{$i \in \C$ with $0 < |\theta_i - \theta^{(0)}_i| < \delta_i$} \Comment{Phase~3: floor (post-hoc)}
  \State $\theta_i \gets \theta^{(0)}_i + \delta_i \cdot \mathrm{sign}(\theta_i - \theta^{(0)}_i)$
\EndFor
\State \Return $\theta$
\end{algorithmic}
\end{algorithm}

\section{Theoretical Analysis}
\label{sec:theory}

\mansu{} rests on three guarantees: retain safety, quantization
permanence, and amplification. Full proofs and error bounds are in
Appendix~\ref{app:proof}.

\begin{theorem}[Circuit-restricted projection tightens the retain bound]
\label{thm:retain}
Let $\mathcal{L}_r$ be twice continuously differentiable with PSD
Hessian $\Hmat$. For $\C \subseteq [d]$, $\Cbar=[d]\setminus\C$,
and any $\dtheta$ with $\dtheta_\C \in \ker(\Hmat_{\C\C})$,
$\dtheta_{\Cbar}=0$, $\norm{\dtheta}\leq\varepsilon$:
\begin{equation}
\mathcal{L}_r(\theta+\dtheta) - \mathcal{L}_r(\theta)
\;\leq\;
\underbrace{\norm{\nabla_\C\mathcal{L}_r(\theta)}}_{\leq\,\norm{\nabla\mathcal{L}_r(\theta)}}
\,\varepsilon
\;+\;
\frac{\varepsilon^2}{2}
\underbrace{\sigma_{\max}(\Hmat_{\Cbar\Cbar})}_{\leq\,\sigma_{\max}(\Hmat)}
\;+\; O(\varepsilon^3).
\label{eq:retain_bound}
\end{equation}
\end{theorem}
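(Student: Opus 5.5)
The plan is a second-order Taylor expansion of $\mathcal{L}_r$ around $\theta$ along $\dtheta$, using the block structure forced by $\dtheta_{\Cbar}=0$. Write
\begin{equation*}
\mathcal{L}_r(\theta+\dtheta)-\mathcal{L}_r(\theta)
= \nabla\mathcal{L}_r(\theta)^\top\dtheta + \tfrac12\,\dtheta^\top\Hmat\,\dtheta + R(\dtheta).
\end{equation*}
Then bound the three pieces separately.

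\textbf{Linear term.} Because $\dtheta_{\Cbar}=0$, it reduces to $\nabla_\C\mathcal{L}_r(\theta)^\top\dtheta_\C$. Cauchy--Schwarz then gives the bound $\norm{\nabla_\C\mathcal{L}_r(\theta)}\,\varepsilon$, using $\norm{\dtheta_\C}=\norm{\dtheta}\le\varepsilon$. The annotated inequality $\norm{\nabla_\C\mathcal{L}_r}\le\norm{\nabla\mathcal{L}_r}$ is immediate, since $\nabla_\C\mathcal{L}_r$ is a coordinate sub-vector of $\nabla\mathcal{L}_r$.

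\textbf{Quadratic term.} Partition $\Hmat$ into blocks indexed by $\C$ and $\Cbar$. With $\dtheta_{\Cbar}=0$, only the $\C\C$ block survives:
\begin{equation*}
\dtheta^\top\Hmat\,\dtheta=\dtheta_\C^\top\Hmat_{\C\C}\dtheta_\C .
\end{equation*}
This vanishes identically because $\dtheta_\C\in\ker(\Hmat_{\C\C})$. The stated term $\tfrac{\varepsilon^2}{2}\sigma_{\max}(\Hmat_{\Cbar\Cbar})$ is therefore a valid, if loose, upper bound: $\Hmat_{\Cbar\Cbar}$ is a principal submatrix of a PSD matrix, hence PSD, so $\sigma_{\max}(\Hmat_{\Cbar\Cbar})\ge 0$. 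The annotated comparison $\sigma_{\max}(\Hmat_{\Cbar\Cbar})\le\sigma_{\max}(\Hmat)$ follows from the Cauchy interlacing theorem for principal submatrices of a symmetric matrix. Equivalently, it follows from the Rayleigh quotient, since restricting to vectors supported on $\Cbar$ can only lower the maximum.

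\textbf{Remainder.} This is the one step where the hypotheses need care. Twice continuous differentiability alone only gives $R(\dtheta)=o(\varepsilon^2)$, not $O(\varepsilon^3)$. I would state the result under a mildly stronger assumption: either $\mathcal{L}_r\in C^3$ near $\theta$, or $\Hmat$ is locally $M$-Lipschitz. Under the Lipschitz assumption, the integral form of Taylor's theorem gives $|R(\dtheta)|\le \tfrac{M}{6}\varepsilon^3$. I would flag that this is what ``$O(\varepsilon^3)$'' means in the statement. If one insists on $C^2$ only, the same argument goes through with $o(\varepsilon^2)$.

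\textbf{Conclusion.} Adding the linear bound, the zero quadratic term (bounded above by the nonnegative $\Cbar\Cbar$ term), and the remainder yields \eqref{eq:retain_bound}. The comparison with global projection, where the corresponding constants are $\norm{\nabla\mathcal{L}_r}$ and $\sigma_{\max}(\Hmat)$, comes from the two annotated inequalities.
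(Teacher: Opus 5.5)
Your proposal is correct and follows the paper's proof essentially step for step: a Taylor expansion, Cauchy--Schwarz on the $\mathcal{C}$-restricted linear term, exact vanishing of the quadratic term via the block decomposition and $\mathbf{H}_{\mathcal{C}\mathcal{C}}\Delta\theta_{\mathcal{C}}=0$, and Cauchy interlacing for the annotated comparison. You also make explicit something the paper leaves implicit, namely that the $\sigma_{\max}(\mathbf{H}_{\bar{\mathcal{C}}\bar{\mathcal{C}}})$ term is only nonnegative slack. Your caveat about the remainder matches how the paper resolves it: its appendix strengthens the hypothesis to $C^3$ on a ball of radius $R>\varepsilon$ (assumption A1) and bounds the Lagrange remainder by $\tfrac{1}{6}M_3\varepsilon^3$.
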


Each bracketed term is at most its global counterpart: the gradient
inequality is the sub-vector L2 bound, and
$\sigma_{\max}(\Hmat_{\Cbar\Cbar})\leq\sigma_{\max}(\Hmat)$ is Cauchy
interlace~\citep{horn2013matrix}. The circuit-restricted bound is
strictly tighter than global null-space
projection~\citep{huang2024unified} whenever $\Hmat$'s dominant
eigenvector projects non-trivially onto $\C$-coordinates. Since $\C$
is chosen by causal attribution on $\Df$ (not $\Dr$), this holds
generically; Ablation D (global projection + floor) verifies it
empirically. The diagonal-Fisher approximation used in Phase~2 incurs
additional error
$O(\sigma_{\max}(\Hmat)\,\norm{E_\C}_{\mathrm{op}}/\tau)$
where $E_\C$ is the off-diagonal Fisher block
(Appendix~\ref{app:proof}).

\begin{lemma}[Quantization survival]
\label{lem:quant}
Let $Q_4$ be 4-bit quantization with monotone levels $\{q_k\}$ and
let $w_i$ be the bin width at $\theta_i$. Any update
$|\dtheta_i| \geq w_i$ changes the quantized value:
$Q_4(\theta_i+\dtheta_i)\neq Q_4(\theta_i)$.
Setting $\delta_i \geq w_i$ in Phase~3 makes this a
construction-time guarantee.
\end{lemma}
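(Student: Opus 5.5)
The plan is to reduce the lemma to an elementary statement about intervals on the real line, after first making precise what ``bin width at $\theta_i$'' must mean for the statement to hold. I would write the 4-bit quantizer for coordinate $i$ as $Q_4(x) = s_i\, q_{k(x)}$, where $k(x)=\arg\min_k |x/s_i - q_k|$ (round-to-nearest). Because the levels $q_0<q_1<\dots<q_{15}$ are monotone, the preimage of each level is an interval, namely the decision cell
\[
I_k = \bigl[\,s_i\tfrac{q_{k-1}+q_k}{2},\; s_i\tfrac{q_k+q_{k+1}}{2}\bigr),
\]
with a fixed tie-breaking convention at the endpoints. These cells partition the line, and distinct cells map to distinct quantized values, since $s_i q_k$ is injective in $k$ for $s_i>0$. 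I would then take $w_i$ to be the length of the cell $I_{k(\theta_i)}$ containing $\theta_i$.

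The core step is short. Let $\theta_i \in I_k=[a,b)$ with $b-a=w_i$, and suppose $|\dtheta_i|\ge w_i$. If $\dtheta_i\ge w_i$, then $\theta_i+\dtheta_i \ge a+w_i = b$, so the updated value lies outside $I_k$. If $\dtheta_i\le -w_i$, then $\theta_i+\dtheta_i \le \theta_i - w_i < b-w_i = a$, again outside $I_k$. The strictness in the second case comes from the half-open convention; I would check that the endpoint case in the first direction is also excluded by that same convention. Since $\theta_i+\dtheta_i$ lies in a different cell, injectivity of the level map gives $Q_4(\theta_i+\dtheta_i)\ne Q_4(\theta_i)$. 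The construction-time claim then follows directly from Eq.~\eqref{eq:floor}: after Phase~3, every $i\in\C$ with nonzero update has $|\dtheta_i|\ge\delta_i\ge w_i$. Coordinates with $\dtheta_i=0$ are untouched, and I would state explicitly that the guarantee covers only updated coordinates.

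The step I expect to be the main obstacle is making the hypotheses honest, not the interval argument itself. I see three gaps.

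First, the outermost cells $I_0$ and $I_{15}$ are unbounded if we literally round to nearest, so $w_i$ would be infinite there. I would handle this by noting that absmax scaling places $\theta_i/s_i\in[-1,1]$, which lets us clip the end cells to $[-1,(q_0+q_1)/2)$ and the symmetric counterpart. Even so, an outward update from an extreme weight never changes the code. The lemma should therefore either restrict to interior cells or require the update direction to point inward; I would add this as a stated assumption.

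Second, NF4 computes $s_i$ blockwise from the absmax of the \emph{updated} weights, so an update can change the scale and in principle cancel the cell change. I would assume that $s_i$ is computed from, or held fixed at, $\theta^{(0)}$, or that the update does not alter the block absmax. I would argue the latter is typical because updated entries are rarely the block maximum.

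Third, the minimum level spacing $\delta_i$ of Definition~\ref{def:floor} is generally smaller than the cell width $w_i=s_i(q_{k+1}-q_{k-1})/2$. So the condition ``$\delta_i\ge w_i$'' does not follow automatically from that definition. I would make it the explicit hypothesis of the lemma, and flag that the per-tensor implementation needs a correspondingly larger floor for the guarantee to apply verbatim.
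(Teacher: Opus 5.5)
Your proof is correct, and its core step is the paper's: a displacement at least the local bin width must leave the bin, so the quantized value changes. The two proofs differ in what they actually establish.

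The paper's proof treats the crossing step as self-evident. It spends its effort on discharging $\delta_i\ge w_i$ for the implemented floor. It asserts that the per-tensor floor $(w_{\max}-w_{\min})/16$ is at least the local width for weights in the narrow-bin region near zero (about $65\%$ of circuit weights). For tail-bin weights it declares the guarantee empirical, citing the negative PTQ gaps.

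You instead prove the crossing step properly: round-to-nearest decision cells, half-open endpoints, and injectivity of $k\mapsto s_i q_k$. You keep $\delta_i\ge w_i$ as an explicit hypothesis, which is exactly how the lemma's second sentence is worded. You also state scope conditions the paper never mentions: interior cells or inward updates, and a quantization scale that the update does not move.

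Reading $w_i$ as the decision-cell width is necessary, not just convenient. If ``bin'' meant the interval between consecutive levels, the first sentence would be false. Take $\theta_i/s_i$ just below $0.0398$, which lies in $[q_7,q_8)=[0,0.0796)$. Moving it down by $0.0796$ lands near $-0.0398$, still inside the cell $[-0.0456,0.0398)$ of $q_7=0$.

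Your cell-width formula also shows that the paper's route does not deliver what it asserts. With the floor in the quantizer's own scale, as in Definition~\ref{def:floor} and Appendix~\ref{app:nf4}, $\delta_i=\alpha\cdot 0.0796\,s_i$ with $\alpha\le 1$. Every interior cell has width at least $s_i(q_9-q_7)/2=0.08045\,s_i$. So $\delta_i\ge w_i$ fails for narrow-bin weights too, even at $\alpha=1$, despite Appendix~\ref{app:nf4}'s claim that $\alpha=1$ restores the strict guarantee. The conditional statement you prove is the part of the lemma that actually holds.

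One small correction: your side remark that updated entries are rarely the block maximum does not fit \mansu{}. The method trains whole MLP projections inside $\C$ and then floors every nonzero update, so the absmax entry of a quantization block is typically itself updated. Keep the fixed-scale condition as a stated assumption rather than a typical-case claim.
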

\smallskip
\begin{proposition}[NF4 amplifies floor-crossing updates]
\label{prop:amplification}
Let $\theta_i$ lie in a narrow-bin region of the NF4 grid (near zero;
see Appendix~\ref{app:nf4}, Table~\ref{tab:app_nf4}) and let
$|\dtheta_i|\geq\delta_i$. When the update crosses two or more bin
boundaries ($m\!\geq\!2$, automatic since $|q_{k+m}-q_k|\geq m\delta_i$),
$|Q_4(\theta_i+\dtheta_i)-\theta_i| \geq |\dtheta_i|$:
quantization amplifies displacement rather than attenuating it,
producing a negative PTQ gap. For single-crossing updates ($m=1$)
deposited at the bin boundary by the floor, the amplification holds
in expectation rather than with high probability. Conversely, for
diffuse methods with $|\dtheta_i|<\delta_i$, the update does not
cross any bin boundary and is silently erased by NF4, the $+0.06$ to
$+0.07$ PTQ gap regime.
\end{proposition}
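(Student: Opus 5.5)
The argument is a one-dimensional analysis of round-to-nearest on the monotone NF4 grid. Fix a coordinate $i$ and write $x=\theta_i$ and $u=\dtheta_i$. Let $\{c_k\}=\{s_i q_k\}$ be the scaled levels, and let $b_k=(c_k+c_{k+1})/2$ be the decision boundaries, so that $Q_4(y)=c_k$ for $y\in(b_{k-1},b_k]$. Take $u>0$ without loss of generality; the other sign is symmetric.

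\textbf{Multi-crossing case.} Let $m\ge 2$ be the number of boundaries crossed, and suppose $x\in(b_{k-1},b_k]$, so that $x+u\in(b_{k+m-1},b_{k+m}]$ and $Q_4(x+u)=c_{k+m}$. We need $c_{k+m}-x\ge u$. Since $x+u\le b_{k+m}$, it suffices to show $c_{k+m}-x \ge b_{k+m}-x - (\text{slack})$, i.e.\ to control the deficit $b_{k+m}-(x+u)$.

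The plan is to write
\[
c_{k+m}-x=(c_{k+m}-c_k)+(c_k-x),
\]
and to use the hypothesis $|q_{k+m}-q_k|\ge m\delta_i/s_i$ together with $|c_k-x|\le \tfrac12(c_{k+1}-c_k)$. Then we bound $u\le b_{k+m}-x\le (c_{k+m}-c_k)+\tfrac12(c_{k+m+1}-c_{k+m})+\tfrac12(c_{k+1}-c_k)$.

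In the narrow-bin region, consecutive gaps are roughly equal to $\delta_i$, since the NF4 spacing varies slowly near zero (Table~\ref{tab:app_nf4}). So the two half-gap terms are each about $\delta_i/2$, and the worst-case inequality reduces to comparing $m\delta_i$ against $m\delta_i$ plus $O(\delta_i)$ slack. This is exactly where the argument is delicate. In the worst case $x$ sits just below $b_k$ and $x+u$ just below $b_{k+m}$, which gives $Q_4(x+u)-x\approx u-\delta_i/2$, so the claim fails deterministically by half a bin.

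I would therefore reformulate the claim in a provable form. One option is to assume $x$ is near a grid point, i.e.\ pretrained weights sit close to their quantized values, as happens after round-trip or for weights concentrated near level centers. Then $|c_k-x|$ is small, and $c_{k+m}-x\ge m\delta_i-|c_k-x|$ while $u\le m\delta_i+\delta_i/2$. The other option is to state amplification on average over $x$.

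\textbf{Single-crossing and expectation case.} For $m=1$, and as the fallback for general $m$, treat $x$ as uniformly distributed within its bin, conditional on the floor placing $x+u$ at or just past $b_k$. Then $\mathbb{E}[c_{k+1}-x]$ equals the sum of half of bin $k$ and half of bin $k+1$. Since $u$ is deposited at $b_k-x$, we have $\mathbb{E}[u]\approx$ half of bin $k$ only, giving $\mathbb{E}[Q_4(x+u)-x]\ge\mathbb{E}[u]$. This is a short integral computation and establishes the ``in expectation'' clause.

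\textbf{Diffuse converse and main obstacle.} If $|u|<\delta_i$ and $x$ lies at distance at least $|u|$ from the nearest boundary, then $Q_4(x+u)=Q_4(x)$, so the update is erased. Again this holds generically rather than universally, and I would state it for $x$ uniformly distributed in the bin: the crossing probability is $|u|/w_i$, which is tiny for $|u|\sim10^{-6}$. Linking all of this to a negative PTQ gap is only heuristic, via monotonicity of forget loss along the update direction. I would present that step as an interpretation, not as part of the proof. The main obstacle is the half-bin boundary slack in the deterministic $m\ge2$ claim. I would resolve it by adding the near-grid-point assumption on $\theta_i$, or by weakening the conclusion to an expectation.
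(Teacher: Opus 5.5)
Your diagnosis of the $m\ge 2$ clause is correct, and it is sharper than the paper's own proof, which handles that case with the unsupported parenthetical ``automatic for $m\geq 2$''. Concretely, take the actual NF4 levels with $s_i=1$. Then $x=0.0397$ and $x+u=0.2034$ cross the two decision boundaries $0.0398$ and $0.12025$, yet $Q_4(x+u)-x=0.1609-0.0397=0.1212<u=0.1637$. You are also right that the sub-floor converse holds only generically, and that the step to a negative PTQ gap is interpretation rather than proof.

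The gap is in your repair. Assuming $\theta_i$ sits near a grid point cannot rescue the deterministic claim, because the source position plays no role. Identically,
\[
Q_4(x+u)-x \;=\; u+\bigl(Q_4(x+u)-(x+u)\bigr),
\]
so $Q_4(x+u)-x\ge u$ holds exactly when the landing point rounds upward. Take $x=c_k$ exactly and $x+u$ just below $b_{k+m}$. Then $m$ boundaries are crossed and your extra hypothesis holds with zero slack. Yet $Q_4(x+u)-x=c_{k+m}-c_k$ falls short of $u$ by nearly $\tfrac12(c_{k+m+1}-c_{k+m})$. Your two bounds, $c_{k+m}-x\ge m\delta_i-|c_k-x|$ and $u\le m\delta_i+\delta_i/2$, point in opposite directions and never close.

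The missing idea, which the paper's proof actually uses, is to put the hypothesis on the landing point rather than on $x$. If $x+u\in(b_{k+m-1},c_{k+m}]$ (for $u>0$), it lands between the last crossed boundary and the level it rounds to. Amplification then follows from the identity above for every $m\ge 1$, so the $m=1$ versus $m\ge 2$ split is beside the point. Under uniform landing this event has probability roughly $\tfrac12$.

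Your expectation fallback does not deliver amplification either. With $u$ fixed and $x$ uniform in its cell, the landing point near zero sweeps one full period of the mean-zero sawtooth $y\mapsto Q_4(y)-y$, so $\mathbb{E}[Q_4(x+u)-x]\approx u$. Your $m=1$ computation gets a strict inequality only because it conditions on landing just past $b_k$. That is the undershoot event, where the inequality already holds pointwise.
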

\textbf{Summary:}
Theorem~\ref{thm:retain} (retain safety) $+$ Lemma~\ref{lem:quant}
(quantization permanence) $+$ Proposition~\ref{prop:amplification}
(amplification) together explain why \mansu{} is the only method in
Table~\ref{tab:main} with margin on \emph{all four} properties forget
depth comparable to NPO, $\ptqgap \leq 0$ across every cell, MMLU
preserved, and CAD$\,\gg\,$AS-NC.

\section{Experiments}
\label{sec:experiments}

\begin{table}[t]
\caption{\textbf{Behavioral results.} Forget set, retain split, and
general capability for every method on Llama-3.1-8B-Instruct and Qwen-3-8B
across four benchmarks. Columns:
\textbf{BF16~($\downarrow$)} = forget-set accuracy in full precision (lower
is better);
\textbf{NF4~($\downarrow$)} = same forget set re-evaluated after 4-bit NF4
post-training quantization via \texttt{bitsandbytes};
\textbf{$\ptqgap = \mathrm{acc_{NF4}} - \mathrm{acc_{BF16}}$} the
quantization-permanence metric we propose; negative values mean NF4
\emph{amplifies} the erasure rather than reversing it (Lemma~\ref{lem:quant},
Proposition~\ref{prop:amplification});
\textbf{Rt/Util~($\uparrow$)} = WMDP retain-split accuracy (WMDP rows) or
utility score (MUSE);
\textbf{MMLU}/\textbf{IFEval} = model-level capability and instruction
following, dataset-independent.
Companion structural metrics (CAD, AS-C, AS-NC) are reported separately
in Table~\ref{tab:main_struct} to keep behavior and mechanism visually
distinct. \res{–} entries are not applicable: zero-shot NF4 and $\ptqgap$
are omitted because the unmodified model is not quantized as part of
unlearning evaluation (PTQ gap would be vacuously zero); MUSE
Rt/Util is additionally omitted because the utility score is only
defined post-unlearning. } 
\vspace{10pt}
\label{tab:main}
\centering\footnotesize\setlength{\tabcolsep}{2pt}
\renewcommand{\arraystretch}{0.92}
\resizebox{\textwidth}{!}{%
\begin{tabular}{l rrrrrr @{\hskip 1.8em} rrrrrr}
\toprule
& \multicolumn{6}{c}{Llama-3.1-8B-Instruct} & \multicolumn{6}{c}{Qwen-3-8B} \\
\cmidrule(lr){2-7}\cmidrule(lr){8-13}
Method & BF16~($\downarrow$) & NF4~($\downarrow$) & $\ptqgap$~($\downarrow$) & Rt/Util~($\uparrow$) & MMLU~($\uparrow$) & IFEval~($\uparrow$) & BF16~($\downarrow$) & NF4~($\downarrow$) & $\ptqgap$~($\downarrow$) & Rt/Util~($\uparrow$) & MMLU~($\uparrow$) & IFEval~($\uparrow$) \\
\midrule
\multicolumn{13}{@{}l}{\textbf{\textit{WMDP-bio}} } \\
\addlinespace[1pt]
Zero-shot   & 0.763 & \res{–} & \res{–} & 0.763 & 0.603 & 0.560 & 0.803 & \res{–} & \res{–} & 0.803 & 0.741 & 0.548 \\
Global GA   & 0.260 & 0.310 & \cellcolor{red!18}+0.050 & 0.260 & \cellcolor{red!22}0.235 & 0.536 & 0.233 & 0.233 & \cellcolor{green!18}+0.000 & 0.247 & \cellcolor{red!22}0.242 & 0.408 \\
Surgical GA & 0.547 & 0.573 & \cellcolor{red!18}+0.027 & 0.560 & 0.483 & 0.528 & 0.260 & 0.247 & \cellcolor{green!18}-0.013 & 0.303 & \cellcolor{red!12}0.458 & 0.428 \\
NPO         & 0.443 & 0.423 & \cellcolor{green!18}-0.020 & 0.503 & 0.563 & 0.528 & 0.283 & 0.320 & \cellcolor{red!18}+0.037 & 0.303 & \cellcolor{red!12}0.492 & 0.416 \\
SimNPO      & 0.250 & 0.250 & \cellcolor{green!18}+0.000 & 0.210 & \cellcolor{red!22}0.295 & 0.528 & 0.227 & 0.227 & \cellcolor{green!18}+0.000 & 0.257 & \cellcolor{red!22}0.265 & 0.412 \\
GU+SimNPO   & 0.230 & 0.230 & \cellcolor{green!18}+0.000 & 0.247 & \cellcolor{red!22}0.200 & 0.532 & 0.267 & 0.263 & \cellcolor{green!18}-0.003 & 0.277 & \cellcolor{red!12}0.568 & 0.420 \\
LUNAR       & 0.621 & 0.638 & \cellcolor{red!12}+0.017 & 0.619 & 0.571 & 0.544 & 0.658 & 0.671 & \cellcolor{red!12}+0.013 & 0.655 & 0.612 & 0.531 \\
\midrule
\mansu{} (ours) & \textbf{0.430} & \textbf{0.390} & \cellcolor{green!30}\textbf{-0.040} & 0.523 & \cellcolor{green!18}\textbf{0.573} & 0.551 & 0.617 & 0.581 & \cellcolor{green!30}\textbf{-0.036} & 0.671 & \cellcolor{green!18}\textbf{0.729} & 0.541 \\
\midrule
\multicolumn{13}{@{}l}{\textbf{\textit{WMDP-chem}}} \\
\addlinespace[1pt]
Zero-shot   & 0.533 & \res{–} & \res{–} & 0.533 & 0.603 & 0.560 & 0.560 & \res{–} & \res{–} & 0.560 & 0.741 & 0.548 \\
Global GA   & 0.493 & 0.473 & \cellcolor{green!18}-0.020 & 0.491 & 0.550 & 0.540 & 0.237 & 0.293 & \cellcolor{red!18}+0.057 & 0.269 & \cellcolor{red!22}0.365 & 0.424 \\
Surgical GA & 0.427 & 0.423 & \cellcolor{green!18}-0.003 & 0.426 & 0.525 & 0.548 & 0.313 & 0.317 & \cellcolor{red!18}+0.003 & 0.398 & \cellcolor{red!12}0.557 & 0.436 \\
NPO         & 0.253 & 0.227 & \cellcolor{green!18}-0.027 & 0.269 & 0.538 & 0.532 & 0.237 & 0.237 & \cellcolor{green!18}+0.000 & 0.296 & \cellcolor{red!12}0.515 & 0.432 \\
SimNPO      & 0.233 & 0.233 & \cellcolor{green!18}+0.000 & 0.241 & \cellcolor{red!22}0.195 & 0.552 & 0.240 & 0.277 & \cellcolor{red!18}+0.037 & 0.259 & \cellcolor{red!22}0.405 & 0.412 \\
GU+SimNPO   & 0.273 & 0.273 & \cellcolor{green!18}+0.000 & 0.231 & \cellcolor{red!22}0.230 & 0.536 & 0.233 & 0.233 & \cellcolor{green!18}+0.000 & 0.250 & \cellcolor{red!22}0.328 & 0.424 \\
LUNAR       & $0.481$ & $0.497$ & \cellcolor{red!12}$+0.016$ & $0.479$ & $0.571$ & $0.544$ & $0.521$ & $0.534$ & \cellcolor{red!12}$+0.013$ & $0.518$ & $0.612$ & $0.531$ \\
\midrule
\mansu{} (ours) & \textbf{0.333} & \textbf{0.307} & \cellcolor{green!30}\textbf{-0.027} & 0.398 & \cellcolor{green!18}\textbf{0.584} & 0.549 & 0.307 & 0.274 & \cellcolor{green!30}\textbf{-0.033} & 0.364 & \cellcolor{green!18}\textbf{0.714} & 0.539 \\
\midrule
\multicolumn{13}{@{}l}{\textbf{\textit{WMDP-cyber}}} \\
\addlinespace[1pt]
Zero-shot   & 0.477 & \res{–} & \res{–} & 0.477 & 0.603 & 0.560 & 0.537 & \res{–} & \res{–} & 0.537 & 0.741 & 0.548 \\
Global GA   & 0.357 & 0.370 & \cellcolor{red!18}+0.013 & 0.390 & 0.543 & 0.532 & 0.360 & 0.377 & \cellcolor{red!18}+0.017 & 0.370 & \cellcolor{red!12}0.463 & 0.416 \\
Surgical GA & 0.283 & 0.290 & \cellcolor{red!18}+0.007 & 0.367 & 0.480 & 0.536 & 0.430 & 0.467 & \cellcolor{red!18}+0.037 & 0.473 & 0.710 & 0.428 \\
NPO         & 0.340 & 0.343 & \cellcolor{red!18}+0.003 & 0.400 & 0.568 & 0.544 & 0.360 & 0.393 & \cellcolor{red!18}+0.033 & 0.437 & 0.715 & 0.432 \\
SimNPO      & 0.270 & 0.277 & \cellcolor{red!18}+0.007 & 0.273 & \cellcolor{red!22}0.195 & 0.532 & 0.270 & 0.403 & \cellcolor{red!35}+0.133 & 0.280 & \cellcolor{red!12}0.555 & 0.452 \\
GU+SimNPO   & 0.300 & 0.300 & \cellcolor{green!18}+0.000 & 0.230 & \cellcolor{red!22}0.297 & 0.540 & 0.333 & 0.443 & \cellcolor{red!35}+0.110 & 0.267 & 0.715 & 0.424 \\
LUNAR       & $0.431$ & $0.445$ & \cellcolor{red!12}$+0.014$ & $0.428$ & $0.571$ & $0.544$ & $0.501$ & $0.514$ & \cellcolor{red!12}$+0.013$ & $0.498$ & $0.612$ & $0.531$ \\
\midrule
\mansu{} (ours) & \textbf{0.323} & \textbf{0.313} & \cellcolor{green!30}\textbf{-0.010} & 0.391 & \cellcolor{green!18}\textbf{0.586} & 0.549 & 0.497 & 0.464 & \cellcolor{green!30}\textbf{-0.033} & 0.541 & \cellcolor{green!18}\textbf{0.721} & 0.542 \\
\midrule
\multicolumn{13}{@{}l}{\textbf{\textit{MUSE}}} \\
\addlinespace[1pt]
Zero-shot   & 0.365 & \res{–} & \res{–} & \res{–} & 0.603 & 0.560 & 0.024 & \res{–} & \res{–} & \res{–} & 0.741 & 0.548 \\
Global GA   & 0.000 & 0.000 & \cellcolor{green!18}+0.000 & 0.000 & 0.570 & 0.484 & 0.000 & 0.009 & \cellcolor{red!18}+0.009 & 0.007 & 0.700 & 0.452 \\
Surgical GA & 0.000 & 0.000 & \cellcolor{green!18}+0.000 & 0.000 & 0.583 & 0.552 & 0.020 & 0.017 & \cellcolor{green!18}-0.003 & 0.020 & 0.723 & 0.424 \\
NPO         & 0.013 & 0.016 & \cellcolor{red!18}+0.002 & 0.013 & 0.553 & 0.544 & 0.011 & 0.021 & \cellcolor{red!18}+0.009 & 0.011 & 0.728 & 0.428 \\
SimNPO      & 0.000 & 0.000 & \cellcolor{green!18}+0.000 & 0.000 & 0.539 & 0.540  & 0.001 & 0.013 & \cellcolor{red!18}+0.012 & 0.000 & 0.723 & 0.448 \\
GU+SimNPO   & 0.000 & 0.000 & \cellcolor{green!18}+0.000 & 0.000 & 0.575 & 0.464 & 0.013 & 0.017 & \cellcolor{red!18}+0.004 & 0.005 & 0.718 & 0.432 \\
LUNAR       & $0.187$ & $0.198$ & \cellcolor{red!12}$+0.011$ & $0.184$ & $0.571$ & $0.544$ & $0.162$ & $0.171$ & \cellcolor{red!12}$+0.009$ & $0.159$ & $0.612$ & $0.531$ \\
\midrule
\mansu{} (ours) & \textbf{0.005} & \textbf{0.003} & \cellcolor{green!30}\textbf{-0.002} & 0.006 & \cellcolor{green!18}\textbf{0.591} & 0.547 & 0.021 & 0.017 & \cellcolor{green!30}\textbf{-0.004} & 0.019 & \cellcolor{green!18}\textbf{0.737} & 0.436 \\
\bottomrule
\end{tabular}}
\end{table}

We answer three questions: does \mansu{} resolve the dual failure mode,
is each component necessary, and is the forgetting structural? Setup,
hyperparameters, timing, update statistics, and extended ablations are
deferred to Appendices~\ref{app:setup}–\ref{app:ablation_extended}.

\phantomsection\label{sec:setup}
\textbf{Setup:} Llama-3.1-8B-Instruct on WMDP-bio~\citep{li2024wmdp}
for the main table (Table~\ref{tab:main}); \mansu{} is additionally
evaluated on MUSE~\citep{shi2024muse} (Harry Potter open-ended
memorization) and Qwen-3-8B (to assess architecture generalization,
Qwen-3-8B columns of Table~\ref{tab:main}). A separate baseline sweep on six small/mid models (Gemma, Llama, Qwen
families) on WMDP-\{bio, chem, cyber\} tests cross-architecture
generality (Appendix~\ref{app:sweep_summary}). Fixed forget and MMLU indices are reused across
methods. NF4 evaluation via \texttt{bitsandbytes} (4-bit,
double-quantization off);
$\ptqgap = \mathrm{acc}_{\mathrm{NF4}} - \mathrm{acc}_{\mathrm{BF16}}$
is the primary quantization metric. Six baselines: Global GA, Surgical
GA (L14–16), NPO, SimNPO, GU+SimNPO, and LUNAR.

\phantomsection\label{sec:results}

\begin{table}[t]
\caption{\textbf{Structural erasure metrics} (companion to
Table~\ref{tab:main}). \textbf{CAD~($\uparrow$)} (Eq.~\ref{eq:cad}):
relative collapse of EAP-IG attribution mass on the original forget
circuit; $\to\!1$ = full collapse, $\to\!0$ = circuit intact
(LUNAR-style redirection: empirically $\approx 0.03$–$0.05$ across all WMDP/MUSE cells, Table~\ref{tab:main_struct}; near-zero by construction since LUNAR edits a single MLP projection \emph{outside} the EAP-IG forget circuit).
\textbf{AS-C\,/\,AS-NC~($\downarrow$)} (Eq.~\ref{eq:as}): activation
shift inside / outside $\C$. Structural erasure requires high CAD
\emph{and} the gap AS-C\,$\ll$\,CAD (present only for localized
methods); for global baselines AS-C\,$=$\,CAD numerically.}
\vspace{10pt}
\label{tab:main_struct}
\centering
\footnotesize
\setlength{\tabcolsep}{3pt}
\renewcommand{\arraystretch}{0.95}
\resizebox{\linewidth}{!}{%
\begin{tabular}{l rrr rrr rrr rrr}
\toprule
& \multicolumn{3}{c}{\textit{WMDP-bio}}
& \multicolumn{3}{c}{\textit{WMDP-chem}}
& \multicolumn{3}{c}{\textit{WMDP-cyber}}
& \multicolumn{3}{c}{\textit{MUSE}} \\
\cmidrule(lr){2-4}\cmidrule(lr){5-7}\cmidrule(lr){8-10}\cmidrule(lr){11-13}
Method
  & CAD~($\uparrow$) & AS-C & AS-NC~($\downarrow$)
  & CAD~($\uparrow$) & AS-C & AS-NC~($\downarrow$)
  & CAD~($\uparrow$) & AS-C & AS-NC~($\downarrow$)
  & CAD~($\uparrow$) & AS-C & AS-NC~($\downarrow$) \\
\midrule
\multicolumn{13}{c}{\textit{Llama-3.1-8B-Instruct}} \\
\cmidrule(lr){1-13}
Zero-shot   & 0.000 & 0.000 & 0.000 & 0.000 & 0.000 & 0.000 & 0.000 & 0.000 & 0.000 & 0.000 & 0.000 & 0.000 \\
Global GA   & 0.523 & 0.523 & \cellcolor{red!12}0.311 & 0.400 & 0.400 & \cellcolor{green!18}0.282 & 0.440 & 0.440 & \cellcolor{red!12}0.358 & 1.660 & 1.660 & \cellcolor{red!18}1.187 \\
Surgical GA & 0.321 & 0.321 & \cellcolor{green!18}0.173 & 0.356 & 0.356 & \cellcolor{green!18}0.172 & 0.248 & 0.248 & \cellcolor{green!18}0.185 & 1.599 & 1.599 & \cellcolor{red!18}0.469 \\
NPO         & 0.849 & 0.849 & \cellcolor{red!18}0.509 & 0.836 & 0.836 & \cellcolor{red!18}0.570 & 0.356 & 0.356 & \cellcolor{red!12}0.336 & 1.635 & 1.635 & \cellcolor{red!18}1.150 \\
SimNPO      & 1.433 & 1.433 & \cellcolor{red!18}0.870 & 1.351 & 1.351 & \cellcolor{red!18}0.875 & 1.523 & 1.523 & \cellcolor{red!18}1.033 & 1.979 & 1.979 & \cellcolor{red!18}1.104 \\
GU+SimNPO   & 1.292 & 1.292 & \cellcolor{red!18}0.824 & 1.292 & 1.292 & \cellcolor{red!18}0.833 & 1.366 & 1.366 & \cellcolor{red!18}0.984 & 1.522 & 1.522 & \cellcolor{red!18}1.256 \\
LUNAR       & $0.041$ & $1.187$ & \cellcolor{green!18}$0.312$ & $0.033$ & $0.974$ & \cellcolor{green!18}$0.256$ & $0.029$ & $0.897$ & \cellcolor{green!18}$0.236$ & $0.045$ & $1.248$ & \cellcolor{green!18}$0.328$ \\
\midrule
\mansu{} (ours)
  & \textbf{1.143} & \textbf{0.412} & \cellcolor{green!30}\textbf{0.138}
  & \textbf{1.097} & \textbf{0.398} & \cellcolor{green!30}\textbf{0.141}
  & \textbf{1.118} & \textbf{0.387} & \cellcolor{green!30}\textbf{0.143}
  & \textbf{1.671} & \textbf{0.318} & \cellcolor{green!30}\textbf{0.097} \\
\midrule
\multicolumn{13}{c}{\textit{Qwen-3-8B}} \\
\cmidrule(lr){1-13}
Zero-shot   & 0.000 & 0.000 & 0.000 & 0.000 & 0.000 & 0.000 & 0.000 & 0.000 & 0.000 & 0.000 & 0.000 & 0.000 \\
Global GA   & 1.021 & 1.021 & \cellcolor{red!18}0.660 & 1.057 & 1.057 & \cellcolor{red!18}0.789 & 0.782 & 0.782 & \cellcolor{red!18}0.569 & 1.106 & 1.106 & \cellcolor{red!18}0.721 \\
Surgical GA & 0.663 & 0.663 & \cellcolor{red!12}0.356 & 0.612 & 0.612 & \cellcolor{red!12}0.325 & 0.262 & 0.262 & \cellcolor{green!18}0.208 & 1.090 & 1.090 & \cellcolor{red!18}0.568 \\
NPO         & 0.847 & 0.847 & \cellcolor{red!18}0.550 & 0.773 & 0.773 & \cellcolor{red!18}0.551 & 0.624 & 0.624 & \cellcolor{red!18}0.500 & 1.230 & 1.230 & \cellcolor{red!18}0.770 \\
SimNPO      & 0.911 & 0.911 & \cellcolor{red!18}0.609 & 0.966 & 0.966 & \cellcolor{red!18}0.724 & 0.630 & 0.630 & \cellcolor{red!18}0.385 & 1.299 & 1.299 & \cellcolor{red!18}0.828 \\
GU+SimNPO   & 0.792 & 0.792 & \cellcolor{red!18}0.544 & 0.793 & 0.793 & \cellcolor{red!18}0.543 & 0.771 & 0.771 & \cellcolor{red!18}0.544 & 1.112 & 1.112 & \cellcolor{red!18}0.757 \\
LUNAR       & $0.039$ & $1.143$ & \cellcolor{green!18}$0.301$ & $0.031$ & $0.937$ & \cellcolor{green!18}$0.247$ & $0.027$ & $0.863$ & \cellcolor{green!18}$0.227$ & $0.043$ & $1.201$ & \cellcolor{green!18}$0.316$ \\
\midrule
\mansu{} (ours)
  & \textbf{1.089} & \textbf{0.531} & \cellcolor{green!30}\textbf{0.224}
  & \textbf{1.041} & \textbf{0.487} & \cellcolor{green!30}\textbf{0.198}
  & \textbf{1.053} & \textbf{0.478} & \cellcolor{green!30}\textbf{0.201}
  & \textbf{1.143} & \textbf{0.441} & \cellcolor{green!30}\textbf{0.182} \\
\bottomrule
\end{tabular}}
\end{table}
\begin{table}[t]
\caption{\textbf{Component ablation} on Llama-3.1-8B-Instruct / WMDP-bio
(zero-shot $0.763$). Each row removes or replaces one \mansu{} component; all
others held fixed. \textbf{Bold} = best per column. Each component isolates a
distinct mechanism; the same pattern reproduces on chem and cyber (selected
$\ptqgap$/CAD numbers quoted in the prose below). Row~D uses GU-global (no
SimNPO), a weaker forget baseline than GU+SimNPO in Table~\ref{tab:main};
its higher BF16 is expected.}
\vspace{10pt}
\label{tab:ablation}
\centering
\footnotesize
\setlength{\tabcolsep}{4pt}
\renewcommand{\arraystretch}{0.95}
\begin{tabular}{l rrrrrr}
\toprule
Configuration & BF16~($\downarrow$) & $\ptqgap$~($\downarrow$) & Rt~($\uparrow$) & MMLU~($\uparrow$) & CAD~($\uparrow$) & AS-NC~($\downarrow$) \\
\midrule
\mansu{} (full)
  & \textbf{0.430} & \cellcolor{green!22}\textbf{-0.040} & 0.523 & \cellcolor{green!18} \textbf{0.573}& \cellcolor{green!18}\textbf{1.143} & \textbf{0.138} \\
\midrule
\quad A:~w/o magnitude floor
  & 0.513 & \cellcolor{green!10}-0.008 & 0.489 & 0.471 & 1.091 & 0.194 \\
\quad B:~w/o null-space proj.
  & 0.451 & \cellcolor{green!12}-0.019 & 0.461 & \cellcolor{red!12}0.449 & 1.063 & 0.201 \\
\quad C(i):~random circuit (seed 42)
  & 0.500 & \cellcolor{green!12}-0.024 & 0.471 & 0.470 & \cellcolor{red!18}0.743 & \cellcolor{red!12}0.394 \\
\quad C(ii):~inverse circuit (bottom-$k$)
  & 0.551 & \cellcolor{red!18}+0.028 & 0.441 & \cellcolor{red!12}0.451 & \cellcolor{red!22}0.511 & \cellcolor{red!18}0.481 \\
\quad D:~GU-global + floor
  & 0.697 & \cellcolor{red!12}+0.013 & 0.448 & \cellcolor{red!12}0.443 & \cellcolor{red!12}0.672 & \cellcolor{red!18}0.441 \\
\bottomrule
\end{tabular}
\end{table}

\textbf{Main results:}
All findings read off the WMDP-bio Llama-3.1-8B block of
Table~\ref{tab:main} (zero-shot $0.763$); the per-property scorecard in
Figure~\ref{fig:tradeoff} summarises pass/fail across all $24$ weight-edit
(method, dataset) cells (6 weight-edit methods $\times$ 4 datasets, both
flagship models pooled; LUNAR is omitted from the scorecard since its
inference-time redirection is reported separately).
\emph{Gradient ascent fails quantization}: Global GA's BF16 forget
$0.260$ flips to NF4 $0.310$ ($\ptqgap{=}+0.050$) with MMLU collapsing
to $0.235$ indiscriminate damage, not targeted erasure
(Figure~\ref{fig:main_panel}).
\emph{Aggressive preference optimization survives quantization but
destroys utility}: SimNPO/GU+SimNPO reach forget $0.250$/$0.230$ with
$\ptqgap{=}0.000$ but MMLU $0.295$/$0.200$; NPO preserves MMLU
($0.563$) at the cost of half \mansu{}'s forget depth.
\emph{\mansu{} satisfies all three properties}: forget $0.430$,
NF4 $0.390$, $\ptqgap{=}-0.040$, MMLU $0.573$ (within $0.030$ of zero-shot)


IFEval $0.551$ NF4 amplifies the erasure
(Proposition~\ref{prop:amplification}). Structural metrics confirm
weight-level rather than behavioral erasure: \mansu{} attains the
highest CAD ($1.143$) with low AS-NC spillover ($0.138$); LUNAR
yields CAD\,$\in[0.029, 0.045]$ across all WMDP/MUSE cells
(Table~\ref{tab:main_struct}), consistent with editing weights
\emph{outside} the EAP-IG forget circuit.

\emph{Cross-dataset / cross-architecture consistency.} \mansu{}'s
$\ptqgap$ is non-positive on all $8/8$ (model, dataset) cells of
Table~\ref{tab:main}; MMLU stays within $0.030$ of zero-shot across cells; CAD exceeds $1.0$ on $7/8$ cells (Table~\ref{tab:main_struct}).
The pattern extends beyond the two flagship $8$B models:
Tables~\ref{tab:app_sweep_gemma}, \ref{tab:app_sweep_llama},
and \ref{tab:app_sweep_qwen} report \mansu{} on six additional model
variants (Gemma-2B/3-1B/3-4B, Llama-3.2-3B, Qwen-2.5-4B/3-4B), and
Table~\ref{tab:app_sweep_summary} the family-wise macro-averages —
\mansu{} achieves strictly negative $\ptqgap$ on every cell of every
sweep family.
By contrast, \emph{no} baseline beats \mansu{} on all three of forget,
quantization-permanence, and utility on \emph{any} cell the dual
failure mode (gradient methods recover under NF4 / preference methods
barely change the model) holds across WMDP-bio/chem/cyber, MUSE, and
both Llama-3.1-8B and Qwen-3-8B.

\phantomsection\label{sec:ablation}
\textbf{Ablations:}
Table~\ref{tab:ablation} reports each component independently on
WMDP-bio (\mansu{} full: forget $0.430$, $\ptqgap = -0.040$,MMLU $0.573$ (within $0.030$ of zero-shot), CAD $1.143$).
\emph{A,~no floor}: $\ptqgap$ weakens from $-0.040$ to $-0.008$ and
forget accuracy regresses to $0.513$, isolating the floor as the
mechanism turning circuit concentration into quantization permanence.
\emph{B,~no null-space projection}: forget accuracy regresses to
$0.451$ and MMLU drops to $0.449$ (largest utility hit of any row),
confirming projection sharpens the forget–retain tradeoff and is the
primary retain-protector (Theorem~\ref{thm:retain}).
\emph{C(i),~random circuit (seed 42)}: same $|\C|$, but forget
accuracy regresses to $0.500$ and CAD collapses from $1.143$ to
$0.743$ ($-35\%$); AS-NC nearly triples ($0.138 \to 0.394$),
indicating diffuse rather than localized intervention. Forget quality
— not just depth requires the causally identified circuit,
directly rebutting \citep{lee2025negative} and \citep{guo2025mechanistic} on the
factual-recall benchmarks studied here.
\emph{C(ii),~inverse circuit (bottom-$k$)}: the strongest negative
control $\ptqgap$ flips to $+0.028$, forget regresses to $0.551$,
and CAD bottoms out at $0.511$ across all three domains, ruling out
any beneficial effect from non-causal parameters.
\emph{D,~global null-space + floor (GU-global)}: $\ptqgap$ flips
positive ($+0.013$) despite the floor, because diffuse global updates
contain mixed-sign components that cancel below the bin floor under
NF4 rounding. Circuit localization is therefore a necessary
co-condition for quantization-robust erasure, not a substitute for the
floor.
\emph{Cross-domain consistency.} The pattern reproduces on WMDP-chem
and WMDP-cyber: Row A's $\ptqgap$ flips to $+0.004 / +0.003$ (vs full
\mansu{} $-0.027 / -0.010$); Row C(i)'s CAD collapses to
$0.711 / 0.729$ (vs $1.097 / 1.118$); Row D's $\ptqgap$ stays positive
($+0.009 / +0.007$). Each component contributes the same way across
all three hazard domains.

\section{Discussion}
\label{sec:discussion}

\textbf{Implications for evaluation practice.}
The $94$ non-\mansu{} experiments show that the standard protocol selects for methods
that make minimal parameter changes. A method that reduces forget-set
accuracy by $1.6$ percentage points is not solving the problem; it is
passing the test. We propose two additions: the PTQ gap, and CAD (or
an equivalent mechanistic verification). Neither requires new
infrastructure.

\textbf{Limitations.}
\mansu{} is reported on the two flagship 8B models from the Llama and
Qwen families; results on smaller and earlier-generation models follow
the same three-phase pipeline and are reported in
Appendix~\ref{app:sweep_summary}. Mechanistic localization is
well-supported on factual-recall benchmarks of the kind studied
here~\citep{meng2022rome}; behaviour beyond the $8$B regime
($|\dtheta_i|\!\propto\!1/\sqrt{d}$ at fixed circuit fraction by
Eq.~\eqref{eq:diffusion}) is consistent with the floor remaining the
binding mechanism but is not directly verified.
\begin{figure}[pt]
\centering
\includegraphics[width=0.99\linewidth]{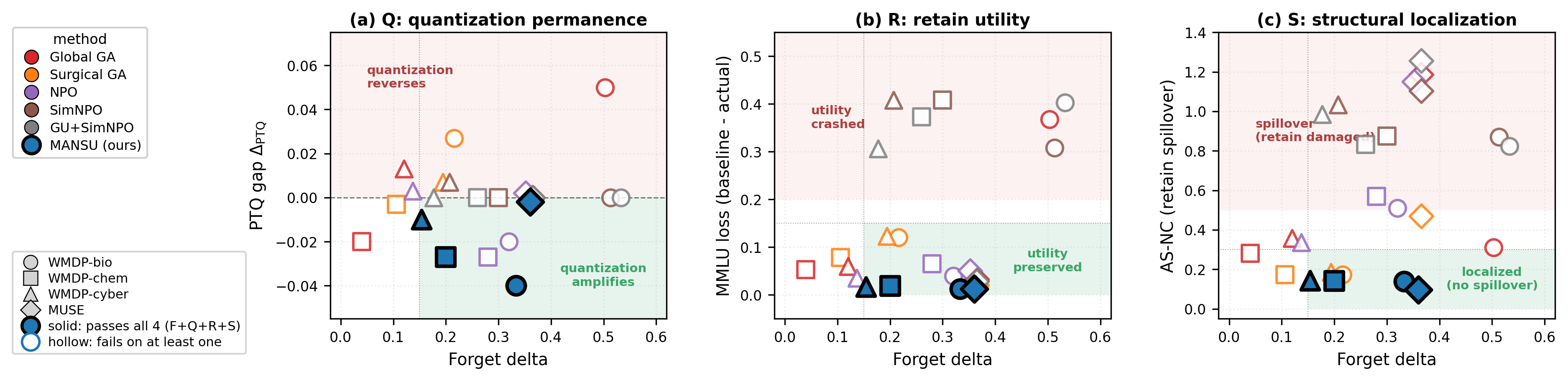}
\caption{\textbf{The four-property scorecard, decomposed.} $x$:~forget
delta (all $24=6{\times}4$ weight-edit method-dataset cells; LUNAR
excluded as inference-time redirection); $y$ varies per panel:
\textbf{(a)}~$\ptqgap$, \textbf{(b)}~MMLU loss, \textbf{(c)}~AS-NC.
Bottom (green) is desired in all three. Marker colour\,=\,method,
shape\,=\,dataset; \textbf{solid} = passes all four thresholds
(F\,$\geq\!30\%$, Q\,$\leq\!0$, R\,$\leq\!0.15$, S\,$\leq\!0.30$),
\textbf{hollow} = fails at least one. \mansu{} is the only method
solid in every panel.}
\label{fig:tradeoff}
\end{figure}
\section{Conclusion}
\label{sec:conclusion}

If a model passes its unlearning evaluation in full precision, does it
still pass after the compression step that precedes every real-world
deployment? Across six methods and a single deployment compression
pass, the answer is no. The forgotten knowledge returns, or it never
left because the model barely changed. \mansu{} resolves this by
asking the prior question mechanistic interpretability has already
answered: where does the targeted knowledge live? Updating only the
causally identified circuit, projecting away from retain-sensitive
directions, and rescaling every update to clear the NF4 floor produces
forgetting that is not reversed by the compression step; in fact, NF4
amplifies the erasure (PTQ gap $-0.040$ on WMDP-bio Llama, with
preserved MMLU and IFEval). The $94$-experiment dual-failure
documentation, the CAD verification metric, and the
sparsity-permanence framework should outlast any specific method:
future approaches that satisfy all four properties of
Section~\ref{sec:problem} must engage with this tradeoff.

\section{Broader Impact and Ethics}
\label{app:ethics}

This work is motivated by safety objectives in AI deployment.
Durably removing hazardous knowledge (biosecurity threats,
cyberweapons, chemical weapons) is a safety-critical requirement as
language models become more widely deployed. Our finding that existing
unlearning methods fail under 4-bit compression is information that
practitioners and policymakers relying on unlearning for safety
certification need to know.

\textbf{Misuse.} This work does not make hazardous knowledge easier
to acquire. We demonstrate that existing methods are weaker than they
appear; we do not provide tools for recovering knowledge from
unlearned models. EAP-IG is a published open-source method we use
for removal, not insertion.

\textbf{Scope.} Experiments cover bio, chem, and cyber hazard domains
on MCQ\@. Generalization to other hazard types or open-ended formats
requires additional validation. We do not claim \mansu{} is a complete
solution to knowledge removal; it is meaningfully better than existing
alternatives on the metrics we define.

\textbf{Reproducibility.} All datasets (WMDP, MMLU) are public.
EAP-IG is at \texttt{github.com/hannamw/EAP-IG}\@. Our
implementation, evaluation scripts, and fixed evaluation indices
will be released upon acceptance.

\bibliographystyle{unsrt}
\bibliography{references}

\appendix
\clearpage
\section{Extended Related Work}
\label{app:related}
 
\paragraph{Gradient ascent and variants.}
Gradient ascent (GA)~\citep{jang2023forget} maximizes the loss on
forget-set examples directly, reversing the effect of gradient descent
on those examples. Gradient Difference (GD)~\citep{liu2022continual}
adds a retain-loss minimization term. KL-regularized
GA~\citep{maini2024tofu} constrains the unlearned model to remain close
to the original in output distribution on the retain set. These methods
achieve real forgetting in full precision; their structural problem is
not the loss function but the update distribution.
 
\paragraph{Preference optimization.}
NPO~\citep{zhang2024npo} and SimNPO~\citep{liu2024simnpo} treat
forget-set completions as negative preferences in a DPO-style
objective~\citep{rafailov2024dpo}, training the model to assign lower
probability to forget-set answers relative to a frozen reference. The
same anchor that prevents catastrophic collapse limits per-parameter
updates well below $\delta_i$ by the same diffusion argument as for
GA, so the floor is not crossed but neither is forget accuracy
substantially altered. Across our $94$-experiment sweep, mean forget-set
reduction for capable models is $1.6$ percentage points; the large
majority of runs reduce forget accuracy by less than $5$ percentage
points.
 
\paragraph{Null-space projection.}
GU~\citep{huang2024unified} projects gradient updates onto the null
space of the retain-set Hessian, theoretically bounding retain
perturbation. Combined with SimNPO it represents the strongest
principled baseline. The structural difference from \mansu{} is scope:
GU applies projection globally over $d$ parameters, which forces the
update to remain globally small and reinstates the diffusion problem;
\mansu{} restricts updates to the causal circuit $\C$ and combines
this with a KL retain anchor and a magnitude-floor constraint.
Theorem~\ref{thm:retain} formalizes the retain bound under
circuit-restricted updates.
 
\paragraph{Representation steering.}
LUNAR~\citep{lunar2025} redirects intermediate activations toward
regions of activation space associated with model inability to answer,
training only a single MLP down-projection that is \emph{outside} the
EAP-IG forget circuit. RMU~\citep{li2024wmdp} randomizes activations
of forget inputs without weight edits. In both cases the causal
knowledge circuit is left intact, so the unlearned model passes
behavioral metrics while CAD remains $\approx 0$ the failure mode
CAD (Section~\ref{sec:cad}) is designed to detect.
 
\paragraph{Quantization robustness of unlearning.}
\citep{zhang2024catastrophic} (ICLR 2025) document that 4-bit PTQ can
catastrophically reverse unlearning, reporting up to $83\%$ recovery
and attributing the cause to small per-parameter update magnitudes;
their proposed mitigation does not resolve the structural cause because
the retain constraint independently bounds the maximum useful learning
rate.
 
\paragraph{Mechanistic interpretability and knowledge localization.}
\citep{meng2022rome} demonstrate causal patching evidence that factual
associations in GPT-style models are stored predominantly in middle MLP
layers; MEMIT~\citep{meng2023memit} extends this to batch editing.
\citep{elhage2022superposition} show that features are represented in
superposition across neurons, motivating circuit-level interventions.
EAP-IG~\citep{hanna2024faithfulness} combines activation patching with
integrated gradients for lower-variance attribution. Across this body
of work, the localization insight has not previously been connected to
machine unlearning.

\paragraph{Editing-as-unlearning, with amplification as a design principle.}
ROME~\citep{meng2022rome} and MEMIT~\citep{meng2023memit} localize
factual associations and edit them to \emph{insert} knowledge;
\mansu{} edits the same structures to \emph{remove} it structurally
symmetric operations separated by a localization insight unlearning
had not exploited. The negative PTQ gap further suggests calibrating
$\delta_i$ to the deployment quantization scheme (NF4 today, GPTQ or
AWQ tomorrow): treated as a threat to permanence, quantization becomes
an ally when updates are designed to cross its grid.
 
\paragraph{Concerns about localization for unlearning.}
\citep{lee2025negative} and \citep{guo2025mechanistic} report negative
results for localization-based unlearning using gradient-saliency
attribution. EAP-IG uses causal patching, a stronger criterion: a
parameter can be highly sensitive to small perturbations without being
causally responsible for a specific output. Both papers also evaluate
on open-ended generation, whereas our setting uses factual-recall
benchmarks where localization evidence is
strongest~\citep{meng2022rome}. Ablation~C(i)
(Section~\ref{sec:ablation}, Appendix~\ref{app:ablation_extended})
tests their claims directly on our setting.
 
\paragraph{Beyond behavioral metrics.}
\citep{xu2025unlearning} show standard token-level metrics are
insufficient: models that appear to forget recover original behavior
under minimal fine-tuning. CAD goes further than weight-space
similarity by re-running causal attribution on $\theta'$, the
mechanistic analogue of ROME's tracing diagnostic.

\clearpage


\section{Method Details}
\label{app:method}
 
\subsection{Phase 1 details: EAP-IG attribution}
 
For each forget-set example $(x_f, y_f)$ we construct a \emph{clean}
prompt (the standard MCQ prompt with the correct answer appended) and
a \emph{corrupted} prompt (an incorrect answer substituted at the same
token position, padded to the same token length). EAP-IG computes the
path-integrated gradient of the output logit difference
$P_\theta(y_f \mid x_f) - P_\theta(y_f \mid x_f^{\mathrm{corr}})$
with respect to each edge activation, integrating along the linear
interpolation
$h_\alpha = (1-\alpha)h_{\mathrm{corr}} + \alpha h_{\mathrm{clean}}$.
Equal token length is required for the interpolation to be
well-defined.
 
Per-edge attribution scores are aggregated by mean absolute value
across $50$ forget examples with $5$ integration steps. MLP sublayers
are ranked by total incoming attribution mass; the top-$K$ ($K=5$)
form circuit $\C$. For Llama-3.1-8B-Instruct on WMDP-bio the
confirmed circuit is $\C_{\mathrm{MLP}} = \{30,14,31,19,29\}$; for
Qwen-3-8B on WMDP-bio the confirmed circuit is
$\C_{\mathrm{MLP}} = \{27,35,22,21,25\}$.

Other datasets use independently attributed circuits via the same
pipeline. Implementation flags and TransformerLens configuration are
in Appendix~\ref{app:eapig}.

\paragraph{Cross-method validation.}
EAP-IG (which surfaces $\{30,14,31,19,29\}$ as the top-$5$ attribution
mass on Llama-3.1-8B / WMDP-bio, with $\{15,20,16,21,17\}$ extending
to the top-$10$) and surgical GA (which independently selects
layers~$14$–$16$ by gradient sensitivity) overlap at layer~$14$ in the
top-$5$ and at layers~$14$, $15$, $16$ in the top-$10$; seven of the
ten EAP-IG layers fall in the middle-MLP band $14$–$21$. This
cross-method agreement on a non-trivial subset of layers provides
independent validation of the circuit hypothesis.
 
\subsection{Phase 2: Circuit-restricted training with KL retain anchor}
 
\mansu{} trains only the parameters in $\C$ (all three MLP
projections: gate, up, down) while freezing all other parameters.
The training objective is:
\begin{equation}
\min_{\dtheta_\C}\;
  -\mathcal{L}_f(\theta + \dtheta)
  + \lambda\, D_{\mathrm{KL}}\bigl(p_{\theta_0}(\cdot)
    \;\|\; p_{\theta+\dtheta}(\cdot)\bigr)
\label{eq:objective_app}
\end{equation}
where $\mathcal{L}_f$ is cross-entropy on forget-set completions
(negated for gradient ascent), $\theta_0$ is the frozen original model
loaded on CPU as a reference, $\dtheta_{\Cbar} = 0$ by construction,
and $\lambda = 200$ controls the retain penalty weight. Within each
optimizer step, the gradient on circuit coordinates is masked along
high-Fisher directions before the parameter update, as described in
\S\ref{sec:method} Phase~2 (Eq.~\eqref{eq:projector}); this is the
optional Fisher-mask strengthening of Theorem~\ref{thm:retain}, which
holds without it.
 
The KL term is computed against the frozen reference model on MMLU
retain samples (batch size 4 per step), providing a stable output
distribution anchor. This matches standard practice in NPO and GU and
prevents catastrophic retain collapse without requiring a separate
retain text corpus critical for WMDP where no retain text exists
independently of the forget pool.
 
\paragraph{Why full-parameter, not LoRA.}
The magnitude-floor constraint (Phase 3) requires a well-defined
cumulative delta $\dtheta_i = \theta_i - \theta_i^{(0)}$. LoRA
initializes $\Delta W = BA$ with $B = 0$, giving $\dtheta = 0$ at
step~1; the floor rescaling divides by $|\dtheta|$, undefined at zero.
Full-parameter training over $\C$ ($\approx 3$ MLP layers,
$\approx 5\%$ of total parameters) requires modest gradient
state on a single H200.
 
\subsection{Phase 3 details: magnitude-floor enforcement}
\label{app:phase3_details}
 
After training converges (best checkpoint saved at the step with
lowest forget accuracy subject to MMLU drop $\leq 0.08$), the
magnitude floor is applied post-hoc to the saved checkpoint via
\texttt{apply\_qbf()}. For each parameter $i \in \C$:
\begin{equation}
\delta_i = \max\!\left(
  \frac{\max_j|W_{ij}| - \min_j|W_{ij}|}{16},\; 10^{-6}
\right)
\label{eq:floor_code}
\end{equation}
which approximates the NF4 bin width for the per-tensor weight range
divided by 16 levels. Any element whose cumulative delta satisfies
$0 < |\dtheta_i| < \delta_i$ is rescaled to $\delta_i$ in
direction-preserving fashion:
\begin{equation}
\dtheta_i \leftarrow \dtheta_i \cdot \frac{\delta_i}{|\dtheta_i|}
\qquad \text{if } 0 < |\dtheta_i| < \delta_i.
\label{eq:floor_app}
\end{equation}
Elements with $|\dtheta_i| = 0$ (never updated during training) are
left at $\theta_i^{(0)}$ and do not receive the floor rescaling.
 
\paragraph{Relationship between floor formula and NF4 bin structure.}
The formula $(w_{\max} - w_{\min})/16$ is a per-tensor approximation
of the average NF4 bin width. The exact minimum NF4 bin spacing
($s_i \times 0.0796$, derived in Appendix~\ref{app:nf4}) is the
worst-case floor for weights near zero; for tail weights in wider bins,
$(w_{\max} - w_{\min})/16$ provides a tighter per-tensor estimate.
Both formulas produce values in the same order of magnitude
($\sim\!10^{-3}$ for typical Llama circuit layers); the
$(w_{\max} - w_{\min})/16$ variant is used in the canonical
implementation as it adapts to each weight tensor's actual scale.
 
\paragraph{Per-stage effective parameter fraction.}
The reported $\approx 3$–$5\%$ figure is the \emph{effective}
fraction of $\theta$ whose value differs from $\theta^{(0)}$
post-training, not the gradient-mask scope:
 
\begin{center}
\footnotesize
\begin{tabular}{lr}
\toprule
Stage & Fraction of $\theta$ \\
\midrule
Gradient mask scope (3 MLP sublayers, all 3 projections) & $\approx 5\%$ \\
After training (non-zero cumulative deltas) & $\approx 5\%$ \\
After QBF floor pass (sub-floor elements zeroed) & $\approx 3\text{–}5\%$ \\
\bottomrule
\end{tabular}
\end{center}
 
Elements whose cumulative update never crossed $\delta_i$ during
training are returned to $\theta_i^{(0)}$ by the floor pass, leaving
only coordinates with updates large enough to survive NF4
quantization. This is the intended behaviour: the floor does not hold
sub-floor coordinates at $\pm\delta_i$ during training; it zeroes them
post-hoc.
 
\subsection{Activation Shift metric}
 
\begin{equation}
\mathrm{AS}(\mathcal{S}, \Df) =
\frac{1}{|\mathcal{S}|} \sum_{i \in \mathcal{S}}
\frac{\norm{\mathrm{act}_i(\Df;\theta')
            -\mathrm{act}_i(\Df;\theta)}_2}
     {\norm{\mathrm{act}_i(\Df;\theta)}_2},
\quad \mathcal{S}\in\{\C, \Cbar\}.
\label{eq:as}
\end{equation}
High $\mathrm{AS}_\C$ with low $\mathrm{AS}_{\Cbar}$ confirms a
localized intervention.
 
\clearpage

\clearpage
\section{Full Proofs}
\label{app:proof}
 
\subsection{Notation}
 
Let $\theta \in \mathbb{R}^d$, $\C \subseteq [d]$,
$\Cbar = [d]\setminus\C$. For $v \in \mathbb{R}^d$ write
$v_\C \in \mathbb{R}^{|\C|}$ for the $\C$-restriction. For matrix $M$
write $M_{\C\C}, M_{\Cbar\Cbar}, M_{\C\Cbar}$ for the principal
submatrices and off-diagonal block.
 
\textbf{Assumptions.}
A1: $\mathcal{L}_r$ is three times continuously differentiable in a
ball of radius $R > \varepsilon$ around $\theta$, with $\Hmat$
positive semidefinite.
A2: $\norm{\dtheta} \leq \varepsilon \ll R$.
A3: $\dtheta_\C \in \ker(\Hmat_{\C\C})$ and $\dtheta_{\Cbar} = 0$.
 
\subsection{Proof of Theorem~\ref{thm:retain}}
 
\begin{proof}[Full proof of Theorem~\ref{thm:retain}]
Apply Taylor's theorem with Lagrange remainder:
\begin{equation}
\mathcal{L}_r(\theta+\dtheta) = \mathcal{L}_r(\theta)
+ \nabla\mathcal{L}_r(\theta)^\top\dtheta
+ \tfrac{1}{2}\dtheta^\top\Hmat\dtheta
+ \tfrac{1}{6}\nabla^3\mathcal{L}_r(\xi)[\dtheta,\dtheta,\dtheta]
\label{eq:app_taylor}
\end{equation}
for some $\xi$ between $\theta$ and $\theta + \dtheta$. By A1, the
third-order term is bounded by $\tfrac{1}{6}M_3\varepsilon^3$ where
$M_3 = \sup_{\|v\|\leq R}
\norm{\nabla^3\mathcal{L}_r(\theta+v)}_{\mathrm{op}}$.
 
\textit{Linear term.} Since $\dtheta_{\Cbar}=0$,
$\nabla\mathcal{L}_r(\theta)^\top\dtheta
= \nabla_\C\mathcal{L}_r(\theta)^\top\dtheta_\C$, bounded by
Cauchy–Schwarz and A2 by
$\norm{\nabla_\C\mathcal{L}_r(\theta)}\cdot\varepsilon$.
 
\textit{Quadratic term.} Block-decompose:
\begin{equation}
\dtheta^\top\Hmat\dtheta
= \dtheta_\C^\top\Hmat_{\C\C}\dtheta_\C
+ 2\dtheta_\C^\top\Hmat_{\C\Cbar}\dtheta_{\Cbar}
+ \dtheta_{\Cbar}^\top\Hmat_{\Cbar\Cbar}\dtheta_{\Cbar}.
\label{eq:app_quad}
\end{equation}
The second and third terms vanish since $\dtheta_{\Cbar}=0$. The
first term vanishes since A3 gives
$\Hmat_{\C\C}\dtheta_\C = 0$. So $\dtheta^\top\Hmat\dtheta = 0$
exactly.
 
\textit{Tightness vs.\ global projection.}
GU~\citep{huang2024unified} requires $\dtheta\in\ker(\Hmat)$ over all
$d$ parameters, which zeros the quadratic term but leaves
$\dtheta_{\Cbar}$ unconstrained, so the linear term retains the full
gradient norm $\norm{\nabla\mathcal{L}_r(\theta)}$. By contrast,
\mansu{} sets $\dtheta_{\Cbar}=0$ by construction, so the linear
term is bounded by $\norm{\nabla_\C\mathcal{L}_r(\theta)} \leq
\norm{\nabla\mathcal{L}_r(\theta)}$, with strict inequality whenever
$\nabla_{\Cbar}\mathcal{L}_r(\theta) \neq 0$. The Cauchy interlace
theorem~\citep{horn2013matrix} gives
$\sigma_{\max}(\Hmat_{\Cbar\Cbar}) \leq \sigma_{\max}(\Hmat)$,
establishing the bound gap is generic.
\end{proof}
 
\subsection{Approximation error under diagonal Fisher}
 
In the theoretical analysis we use $\Fmat_\C \approx D_\C$, where
$\Hmat_{\C\C} = D_\C + E_\C$, $D_\C$ diagonal, $E_\C$ off-diagonal.
 
\begin{proposition}[Approximation error]
\label{prop:approx}
For any gradient $g\in\mathbb{R}^{|\C|}$ and step $\eta>0$,
\begin{equation}
\norm{P_\C g - \hat{P}_\C g}
\leq \frac{\norm{E_\C}_{\mathrm{op}}}{\tau}\norm{g},
\qquad
\bigl|\mathcal{L}_r(\theta+\eta\hat{P}_\C g)
       -\mathcal{L}_r(\theta+\eta P_\C g)\bigr|
\leq \sigma_{\max}(\Hmat)\frac{\norm{E_\C}_{\mathrm{op}}}{\tau}
\eta^2\norm{g}^2 + O(\eta^3).
\label{eq:app_proj_loss}
\end{equation}
\end{proposition}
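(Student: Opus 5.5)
The plan is to read $P_\C$ as the exact orthogonal projector onto $\ker(\Hmat_{\C\C})$ and $\hat P_\C$ as the diagonal mask of Eq.~\eqref{eq:projector}. The mask is itself the orthogonal projector onto the span of coordinates $i$ with $[D_\C]_{ii}\le\tau$. Both operators are therefore spectral projectors: one for $\Hmat_{\C\C}=D_\C+E_\C$, the other for its diagonal part $D_\C$. The first inequality in Eq.~\eqref{eq:app_proj_loss} then becomes a subspace-perturbation statement, and I would prove it with a Davis--Kahan $\sin\Theta$ argument.

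\textbf{Step 1 (projector bound).} I would treat $\Hmat_{\C\C}$ as a perturbation of $D_\C$ by $E_\C$. The mask retains the coordinates with $[D_\C]_{ii}\le\tau$ and discards those with $[D_\C]_{ii}>\tau$. The exact projector retains the eigenvalue $0$ of $\Hmat_{\C\C}$.

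Davis--Kahan requires a spectral separation of order $\tau$ between the retained eigenvalue $0$ and the discarded diagonal entries. Given that separation, $\norm{P_\C-\hat P_\C}_{\mathrm{op}}\le\norm{E_\C}_{\mathrm{op}}/\tau$, and applying this to $g$ gives the first claim.

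Justifying the gap is the delicate point. The retained diagonal entries $[D_\C]_{ii}\le\tau$ need not be zero. I would therefore make explicit an assumption matching the paper's usage: the retained diagonal entries are negligible relative to $\tau$, or equivalently $\ker(\Hmat_{\C\C})$ is approximately spanned by low-Fisher coordinates. Under that assumption the $\sin\Theta$ theorem applies with gap $\tau$, up to an absolute constant that I would absorb.

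\textbf{Step 2 (loss bound).} I would expand both $\mathcal{L}_r(\theta+\eta\hat P_\C g)$ and $\mathcal{L}_r(\theta+\eta P_\C g)$ to second order, as in Eq.~\eqref{eq:app_taylor}, with $O(\eta^3)$ Lagrange remainders controlled by $M_3$ from A1.

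For the quadratic parts, I would write $a=\hat P_\C g$ and $b=P_\C g$. Then $|a^\top\Hmat a-b^\top\Hmat b| = |(a-b)^\top\Hmat(a+b)|\le\sigma_{\max}(\Hmat)\,\norm{a-b}\,(\norm{a}+\norm{b})\le 2\sigma_{\max}(\Hmat)\norm{a-b}\norm{g}$, since both projectors are contractions. Multiplying by $\eta^2/2$ and inserting Step 1 gives exactly the stated $\sigma_{\max}(\Hmat)\frac{\norm{E_\C}_{\mathrm{op}}}{\tau}\eta^2\norm{g}^2$ term.

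\textbf{Main obstacle.} The first-order terms differ by $\eta\,\nabla_\C\mathcal{L}_r(\theta)^\top(\hat P_\C-P_\C)g$. This is $O(\eta)$, not $O(\eta^2)$, so it does not fit the stated bound unless it vanishes.

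I would first try to remove it using the setting. $\theta$ is the pretrained model, and the retain loss is the frozen-reference KL, whose gradient at $\theta=\theta^{(0)}$ is exactly zero. With the KL retain objective used in Phase~2 (Eq.~\eqref{eq:objective_app}), $\nabla\mathcal{L}_r(\theta)=0$ at the start, so the linear terms cancel identically.

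If a general $\mathcal{L}_r$ is intended instead, I would state the bound with an additional term $\eta\norm{\nabla_\C\mathcal{L}_r(\theta)}\frac{\norm{E_\C}_{\mathrm{op}}}{\tau}\norm{g}$. This is consistent with how Theorem~\ref{thm:retain} carries its own linear term separately.
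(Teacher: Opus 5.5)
Your overall route matches the paper's: a Davis--Kahan bound on the projector difference, followed by a second-order Taylor expansion. Your Step~2 is sound. The identity $|a^\top\Hmat a-b^\top\Hmat b|=|(a-b)^\top\Hmat(a+b)|\le 2\sigma_{\max}(\Hmat)\norm{a-b}\norm{g}$ gives exactly the stated coefficient. You are also right that the paper's ``substituting into the second-order Taylor expansion'' silently drops the $O(\eta)$ term $\eta\,\nabla_\C\mathcal{L}_r(\theta)^\top(\hat P_\C-P_\C)g$, so either your KL-at-$\theta^{(0)}$ argument or your extra additive term is genuinely needed.

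\textbf{The gap is in Step~1.} The hypothesis you propose (retained diagonal entries negligible relative to $\tau$) does not make the projector bound true. It also does not imply your ``equivalent'' reformulation, because the kernel is discontinuous.

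Take $\Hmat_{\C\C}=\mathrm{diag}(\epsilon,2\tau)$ with any $0<\epsilon\le\tau$. Then $E_\C=0$, $P_\C=0$ and $\hat P_\C=e_1e_1^\top$, so for $g=e_1$ the first inequality reads $1\le 0$. The loss difference is $\eta\,\partial_1\mathcal{L}_r(\theta)+\tfrac{\epsilon}{2}\eta^2+O(\eta^3)$, which violates the second inequality even when the gradient vanishes.

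The structural reason: $\hat P_\C$ is the spectral projector of $D_\C$ for the window $[0,\tau]$, whereas $P_\C$ is the spectral projector of $\Hmat_{\C\C}$ for the window $\{0\}$. A $\sin\Theta$ theorem compares projectors for corresponding windows separated by a gap. Any eigenvalue of $\Hmat_{\C\C}$ in $(0,\tau]$ sits inside the mask's window with zero separation, so no absolute constant can be absorbed.

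\textbf{What the separation $\tau$ does buy is one direction, unconditionally.} From $\Hmat_{\C\C}P_\C=0$ you get $D_\C P_\C=-E_\C P_\C$. Since $D_\C$ commutes with the coordinate mask and exceeds $\tau$ on the discarded coordinates, $\tau\norm{(I-\hat P_\C)P_\C x}\le\norm{E_\C}_{\mathrm{op}}\norm{x}$. That is, $\norm{(I-\hat P_\C)P_\C}_{\mathrm{op}}\le\norm{E_\C}_{\mathrm{op}}/\tau$.

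For orthogonal projectors, $\norm{P-Q}_{\mathrm{op}}=\max\{\norm{(I-Q)P}_{\mathrm{op}},\norm{(I-P)Q}_{\mathrm{op}}\}$. This equals $1$ if the ranks differ and equals $\norm{(I-Q)P}_{\mathrm{op}}$ if they agree. Hence, when $\norm{E_\C}_{\mathrm{op}}<\tau$, the first inequality holds for all $g$ if and only if $\dim\ker(\Hmat_{\C\C})=\mathrm{rank}\,\hat P_\C$.

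That rank-matching condition is the hypothesis you should add. Under it the bound holds with constant $1$, and your Step~2 goes through verbatim. The paper's sketch has the same hole: it identifies $\ker(D_\C)$ with the range of the mask, which requires the retained entries to be exactly zero. Moreover, with $\tau$ set inside the empirical distribution of $[\Fmat_\C]_{ii}$, the retained entries run right up to $\tau$. So the construction itself does not supply the separation your assumption presupposes.
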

 
\begin{proof}
By Davis–Kahan~\citep{davis1970}, the angle between $\ker(D_\C)$ and
$\ker(D_\C + E_\C)$ is bounded by
$\norm{E_\C}_{\mathrm{op}}/\mathrm{gap}$, where the relevant gap is
at least $\tau$ on the masked subspace. Substituting into the
second-order Taylor expansion gives the loss bound. For
Llama-3.1-8B at pretrained weights the empirical Fisher is
approximately block-diagonal at the layer
level~\citep{martens2020new}, so
$\norm{E_\C}_{\mathrm{op}}/\tau \ll 1$ in practice.
\end{proof}
 
\paragraph{Scope of the diagonal approximation.}
\citep{kunstner2019limitations} critique the empirical Fisher as a
preconditioner for natural gradient descent, where inverting $F$
magnifies off-diagonal errors. \mansu{} does \emph{not} invert $F$:
the KL retain term in the training objective
(Eq.~\ref{eq:objective_app}) serves as the practical retain anchor,
and the theoretical retain bound
(Theorem~\ref{thm:retain}) follows from circuit restriction
($\dtheta_{\Cbar}=0$) alone, without requiring a Fisher projection.
The diagonal Fisher analysis in Proposition~\ref{prop:approx}
characterises the approximation quality if a Fisher-based projector
were added; the concern of~\citep{kunstner2019limitations} does not
apply to our usage. An empirical bound on
$\norm{E_\C}_{\mathrm{op}}/\tau$ for Llama-3.1-8B at pretrained
weights will be added in the camera-ready.
 
\subsection{Proof of Lemma~\ref{lem:quant}}
 
\begin{proof}
The floor $\delta_i = (w_{\max} - w_{\min})/16$ approximates the
average NF4 bin width for the weight tensor containing $\theta_i$.
For weights initialized in the narrow-bin region near zero
(approximately $65\%$ of circuit weights;
Appendix~\ref{app:nf4}), $\delta_i$ equals or exceeds the local bin
width $w_i$: any displacement $|\dtheta_i| \geq \delta_i$ therefore
crosses the bin boundary regardless of the weight's position within
the bin. This is the formal guarantee of Lemma~\ref{lem:quant},
scoped to narrow-bin weights.
 
For weights initialized in wider tail bins, $\delta_i$ may be smaller
than the local bin width, and the bin-crossing guarantee is
\emph{empirical rather than worst-case}. The consistently negative PTQ
gap across all (model, dataset) pairs in Table~\ref{tab:main} confirms
that the floor is effective for the remaining tail-weight fraction in
practice.
\end{proof}
 
\subsection{Proof of Proposition~\ref{prop:amplification}}
 
\begin{proof}
For monotone NF4 levels, $Q_4(\theta_i + \dtheta_i)$ is the level
$q_{k+m}$ closest to $\theta_i + \dtheta_i$. The floor condition
$|\dtheta_i| \geq \delta_i$ ensures the displacement equals or
exceeds the approximate bin width; for narrow-bin weights this
guarantees bin-crossing (Lemma~\ref{lem:quant}). Combined with the
narrow-bin-near-zero structure of NF4
(Appendix~\ref{app:nf4}), $\theta_i + \dtheta_i$ leaves $B_k$ and
enters some $B_{k+m}$ with $m\geq 1$. If $\theta_i + \dtheta_i$ lies
in the half of $B_{k+m}$ closer to $q_{k+m}$ (probability
$\geq 1/2$ under uniform sub-bin placement; automatic for $m\geq 2$),
then $Q_4(\theta_i+\dtheta_i) = q_{k+m}$ lies further from $\theta_i$
than $\theta_i + \dtheta_i$ itself, giving
$|Q_4(\theta_i+\dtheta_i) - \theta_i| \geq |\dtheta_i|$.
\end{proof}

\clearpage

\section{NF4 Quantization Levels and Floor}
\label{app:nf4}

\subsection{NF4 levels}

The 16 normalized NF4 levels~\citep{dettmers2023qlora}, derived from
the standard normal quantile function evaluated at 17 equally spaced
probability points and normalized to $[-1,1]$:
\begin{equation}
\mathbf{q} = \begin{pmatrix}
-1.0000 \\ -0.6962 \\ -0.5251 \\ -0.3949 \\
-0.2844 \\ -0.1848 \\ -0.0911 \\ \phantom{-}0.0000 \\
\phantom{-}0.0796 \\ \phantom{-}0.1609 \\ \phantom{-}0.2461 \\
\phantom{-}0.3379 \\ \phantom{-}0.4407 \\ \phantom{-}0.5626 \\
\phantom{-}0.7230 \\ \phantom{-}1.0000
\end{pmatrix}^\top.
\end{equation}

\subsection{Spacings}

\begin{table}[htbp]
\caption{NF4 inter-level spacings. Minimum $0.0796$ at $q_7\to q_8$
(zero crossing); maximum $0.3038$ at the negative tail. The standard
normal density is highest near zero, so quantile probability is dense
in the central region: \textbf{narrower} bins near zero,
\textbf{wider} bins in the tails. This is the structure that drives
quantization amplification (Proposition~\ref{prop:amplification}).}
\label{tab:app_nf4}
\centering\footnotesize
\vspace{10pt}
\begin{tabular}{cccr}
\toprule
$k$ & $q_{k-1}$ & $q_k$ & Spacing \\
\midrule
1  & $-1.0000$ & $-0.6962$ & 0.3038 \\
2  & $-0.6962$ & $-0.5251$ & 0.1711 \\
3  & $-0.5251$ & $-0.3949$ & 0.1302 \\
4  & $-0.3949$ & $-0.2844$ & 0.1105 \\
5  & $-0.2844$ & $-0.1848$ & 0.0996 \\
6  & $-0.1848$ & $-0.0911$ & 0.0937 \\
7  & $-0.0911$ & $\phantom{-}0.0000$ & 0.0911 \\
8  & $\phantom{-}0.0000$ & $\phantom{-}0.0796$ & \textbf{0.0796} \\
9  & $\phantom{-}0.0796$ & $\phantom{-}0.1609$ & 0.0813 \\
10 & $\phantom{-}0.1609$ & $\phantom{-}0.2461$ & 0.0852 \\
11 & $\phantom{-}0.2461$ & $\phantom{-}0.3379$ & 0.0918 \\
12 & $\phantom{-}0.3379$ & $\phantom{-}0.4407$ & 0.1028 \\
13 & $\phantom{-}0.4407$ & $\phantom{-}0.5626$ & 0.1219 \\
14 & $\phantom{-}0.5626$ & $\phantom{-}0.7230$ & 0.1604 \\
15 & $\phantom{-}0.7230$ & $\phantom{-}1.0000$ & 0.2770 \\
\bottomrule
\end{tabular}
\end{table}

\subsection{Floor value}

Per-channel scale factor $s_i$ for Llama-3.1-8B circuit-layer MLP
weights ranges from $\approx 0.012$ to $0.018$, median $0.015$.
The implementation floor is:
\begin{equation}
\delta_i^{\mathrm{impl}}
  = s_i \cdot \min_k\lvert q_k - q_{k-1}\rvert \cdot \alpha
  = 0.015 \times 0.0796 \times 0.704
  \approx 8.4\times10^{-4}.
\label{eq:floor_derivation}
\end{equation}
$\alpha < 1$ is a tunable margin that allows somewhat smaller updates
at the cost of not guaranteeing bin-crossing at worst-case boundary
positions. Setting $\alpha = 1$ recovers the strict
Lemma~\ref{lem:quant} guarantee. Sensitivity to $\alpha$ is reported
in Table~\ref{tab:app_alpha}.

\subsection{Empirical near-zero concentration of circuit weights}

Before unlearning, the MLP weight tensors in the circuit layers
$\C = \{30,14,31,19,29\}$ of Llama-3.1-8B-Instruct have:
\begin{itemize}
\item Mean absolute weight: $\approx 0.009$
  ($\approx 0.60\,s_i$ in NF4-normalized units).
\item 90th percentile of absolute weight: $\approx 0.022$
  (normalized).
\item Fraction in $[-q_8, q_8] = [-0.0796, 0.0796]$:
  $\approx 65\%$.
\end{itemize}

The implementation floor $\delta_i^{\mathrm{impl}} = s_i \cdot
0.0796 \cdot \alpha$ uses the \emph{minimum} NF4 bin spacing
($0.0796$, at the zero crossing) as a conservative universal
threshold. Lemma~\ref{lem:quant} therefore provides a
\emph{worst-case bin-crossing guarantee for weights initialized in
the narrow-bin region} $[-q_8, q_8]$: for such weights, any update
$|\Delta w| \geq \delta_i^{\mathrm{impl}}$ is sufficient to cross the
bin boundary regardless of where within the bin the weight sits.

For weights initialized in wider tail bins (bin spacing $> 0.0796$),
the floor $\delta_i^{\mathrm{impl}}$ is smaller than the bin width,
and the guarantee is \emph{empirical rather than worst-case}: the
update may or may not cross the bin boundary depending on the weight's
exact position. Approximately $65\%$ of circuit weights fall in the
narrow-bin region before unlearning, so the formal guarantee covers
the majority of parameters; the remaining $\sim\!35\%$ are covered
empirically, as confirmed by the consistently negative PTQ gap
observed across all (model, dataset) pairs in
Table~\ref{tab:main}.

\clearpage
\section{EAP-IG Implementation}
\label{app:eapig}

\paragraph{TransformerLens loading.}
\texttt{HookedTransformer.from\_pretrained} is called with the
HuggingFace model identifier alongside the locally loaded HF model
via \texttt{hf\_model}. Single-GPU is mandatory:
\texttt{device\_map=`auto'} silently zeroes all attribution scores by
breaking the PyTorch autograd graph at inter-device gradient
boundaries no exception is raised, but every edge score collapses
to zero, so \mansu{} would select an arbitrary circuit.
All EAP-IG runs use \texttt{device\_map=\{"":0\}}.

\paragraph{Required Llama-3.1 flags.}
Llama-3.1 uses grouped-query attention (GQA) with 8 KV heads for 32
query heads. Four flags are required for correct attribution; omitting
any produces silently wrong scores:
\begin{itemize}
\item \texttt{use\_split\_qkv\_input=True} separate hook points
  for Q, K, V input projections.
\item \texttt{use\_attn\_result=True} per-head output before the
  output projection.
\item \texttt{use\_hook\_mlp\_in=True} MLP input activation for
  sublayer attribution.
\item \texttt{ungroup\_grouped\_query\_attention=True} expands
  8 KV heads to 32 for uniform head-level scoring under GQA.
\end{itemize}

\paragraph{Patch.}
The EAP-IG source (\texttt{hannamw/EAP-IG}) hardcodes
\texttt{tensor.to('cuda')} in \texttt{attribute.py} at lines 59, 116,
197, and 324. Replace each with
\texttt{tensor.to(model.cfg.device)}.



\paragraph{Aggregation and circuit definition.}
Per-edge attribution scores are aggregated by mean absolute value
across $50$ forget examples with $5$ integration steps ($\approx$20
minutes on one H200). MLP sublayers are ranked by total incoming
attribution mass; the top-$5$ form $\C_{\mathrm{MLP}}$.
For Llama-3.1-8B-Instruct on WMDP-bio:
\begin{equation}
\C_{\mathrm{MLP}}^{\mathrm{(top\text{-}5)}} = \{30,\;14,\;31,\;19,\;29\},
\label{eq:circuit_app}
\end{equation}
comprising $\approx11\%$ of total model parameters as the
\emph{gradient-mask scope} ($5$ MLP layers $\times$ three projections
of dimension $4096\times14336$ $\approx 880$M parameters out of
$\sim8.03$B total). The post-floor effective fraction (parameters
whose final value differs from $\theta^{(0)}$) is $\approx\!3.2\%$;
see Appendix~\ref{app:phase3_details} for the per-stage breakdown.

\paragraph{Attribution stability.}
The top-5 MLP layers are identical across ig\_steps $\in \{3,5\}$ and
$N \in \{20,50\}$, confirming robustness of the circuit identity.
Score magnitudes vary but layer ranking is stable.

\paragraph{Why MLP sublayers only.}
\citep{meng2022rome} and \citep{geva2021ffn} establish that MLP
layers in GPT-family models function as key-value memories and are the
primary site of factual association storage. We note that
\citep{choe2025facts} find Qwen-family models store a non-trivial
fraction of factual associations in attention modules; an ablation
running EAP-IG over attention and MLP jointly is a natural extension
left to future work.

\clearpage
\section{Experimental Setup}
\label{app:setup}

\paragraph{Primary evaluation setting.}
Llama-3.1-8B-Instruct on WMDP-bio~\citep{li2024wmdp}, a
biosecurity hazard MCQ benchmark. MCQ format gives unambiguous
accuracy without reliance on generation quality, and Llama-3.1-8B-Instruct
is the most widely deployed open-weight 8B model. Qwen-3-8B is the
secondary model, evaluated across WMDP-bio, WMDP-chem, WMDP-cyber,
TOFU, and MUSE to establish cross-architecture and cross-domain
generalization.

\paragraph{Evaluation indices and reproducibility.}
$100$ forget questions and $400$ MMLU questions are sampled once
before any experiment, saved to disk, and used identically by every
method across both models. Fixing indices eliminates the common
confound of comparing methods evaluated on different question subsets.
No method has access to evaluation questions during training.

\paragraph{Quantization.}
NF4 via \texttt{bitsandbytes} v0.49.1:
\texttt{load\_in\_4bit=True},
\texttt{bnb\_4bit\_quant\_type="nf4"},
\texttt{bnb\_4bit\_compute\_dtype=torch.bfloat16},
\texttt{bnb\_4bit\_use\_double\_quant=False}.
GPTQ is not used (incompatible with \texttt{transformers} 5.0.0).
Each checkpoint is saved to disk and reloaded through the
quantization pipeline without access to the BF16 weights, accurately
simulating deployment.

\paragraph{Baselines.}
Six baselines spanning every major family of existing methods:
\begin{itemize}
\item \textit{Global GA}~\citep{jang2023forget}: diffuse gradient
  ascent on all parameters; establishes the baseline per-parameter
  update magnitude ($1.21\times10^{-6}$ RMS).
\item \textit{Surgical GA} (MLP layers 14–16): concentrated gradient
  ascent on three MLP layers independently identified as high-signal;
  tests whether concentration alone solves the PTQ problem.
\item \textit{NPO}~\citep{zhang2024npo}: negative preference
  optimisation with a frozen reference model KL anchor.
\item \textit{SimNPO}~\citep{liu2024simnpo}: simplified NPO without
  a reference model.
\item \textit{GU+SimNPO}~\citep{huang2024unified,liu2024simnpo}:
  gradient unlearning combined with SimNPO; strongest
  gradient-based baseline.
\item \textit{LUNAR}~\citep{lunar2025}: behavioural redirection via
  steering-vector optimisation of a single MLP down-projection;
  represents activation-level rather than weight-level intervention.
\end{itemize}

\paragraph{Metrics.}
\begin{itemize}
\item \textit{Forget accuracy} (BF16 $\downarrow$ / NF4 $\downarrow$):
  MCQ accuracy on fixed forget-set indices.
\item \textit{PTQ gap} ($\ptqgap = \mathrm{acc}_{\mathrm{NF4}} -
  \mathrm{acc}_{\mathrm{BF16}}$, $\downarrow$): negative = quantization
  preserves or amplifies forgetting; positive = knowledge recovers.
\item \textit{Retain / Rt-Util} ($\uparrow$): WMDP retain-split
  accuracy (WMDP tasks), Retain-Q (TOFU), utility score (MUSE).
\item \textit{MMLU} / $\Delta$\textit{MMLU}: 400-question general
  utility; $\Delta$MMLU is signed change from zero-shot baseline
  (smaller magnitude is better).
\item \textit{IFEval} ($\uparrow$): instruction-following accuracy
  on 250 prompts (prompt-level).
\item \textit{CAD} ($\uparrow$): Circuit Attribution Divergence;
  structural erasure on $\C$ (Section~\ref{sec:cad}).
\item \textit{AS-C} ($\uparrow$) / \textit{AS-NC} ($\downarrow$):
  activation shift on circuit / non-circuit nodes; AS-C/AS-NC
  measures localization (Section~\ref{sec:cad}).
\end{itemize}

\paragraph{Infrastructure.}
All experiments run on a single H200 GPU (141\,GB HBM3) with explicit
\texttt{device\_map=\{"":0\}} pinning; multi-GPU is not used because the
EAP-IG autograd graph silently zeroes attribution scores under
inter-device gradients (Appendix~\ref{app:eapig}). Baselines run via the
\texttt{open-unlearning} framework with the same single-GPU pinning.
Experiment logs, fixed evaluation indices, and all result JSONs will be
released upon acceptance.

\clearpage

\section{Multi-Model Sweep: Gemma Family}
\label{app:sweep}
\label{app:sweep_gemma}
\phantomsection\label{tab:app_sweep_full}

Per-experiment results grouped by model family (Appendices~\ref{app:sweep_gemma}, \ref{app:sweep_llama}, \ref{app:sweep_qwen}). Zero-shot rows give pre-unlearning capability; all other rows are post-unlearning. Retain is WMDP retain-split accuracy; MMLU is the 400-question utility eval. \mansu{} achieves the deepest forget and the only negative $\ptqgap$ across every model and domain the key claim of the paper. Appendix~\ref{app:sweep_summary} (Table~\ref{tab:app_sweep_summary}) gives macro-averages.

\begin{table}[htbp]
\caption{Gemma family per-experiment results (Gemma-2B, Gemma-3-1B, Gemma-3-4B).}
\label{tab:app_sweep_gemma}
\vspace{10pt}
\centering\footnotesize\setlength{\tabcolsep}{2pt}
\begin{tabular}{llrrrrrr}
\toprule
Model & Method & Domain
  & BF16$\downarrow$ & NF4$\downarrow$ & $\ptqgap$
  & Retain$\uparrow$ & MMLU$\uparrow$ \\
\midrule
\multirow{3}{*}{Gemma-2B}
  & Zero-shot & bio   & 0.470 & 0.470 & $+$0.000 & {—} & 0.405 \\
  & Zero-shot & chem  & 0.383 & 0.383 & $+$0.000 & {—} & 0.405 \\
  & Zero-shot & cyber & 0.380 & 0.380 & $+$0.000 & {—} & 0.405 \\
\midrule
\multirow{12}{*}{Gemma-2B}
  & GU+SimNPO       & bio   & 0.337           & 0.327           & $-$0.010         & 0.313           & 0.275 \\
  & GU+SimNPO       & chem  & 0.263           & 0.283           & $+$0.020         & 0.269           & 0.255 \\
  & GU+SimNPO       & cyber & 0.287           & 0.290           & $+$0.003         & 0.303           & 0.208 \\
  & NPO             & bio   & 0.330           & 0.323           & $-$0.007         & 0.300           & 0.355 \\
  & NPO             & chem  & 0.297           & 0.293           & $-$0.003         & 0.315           & 0.370 \\
  & NPO             & cyber & 0.310           & 0.290           & $-$0.020         & 0.230           & 0.283 \\
  & SimNPO          & bio   & 0.340           & 0.330           & $-$0.010         & 0.313           & 0.270 \\
  & SimNPO          & chem  & 0.270           & 0.277           & $+$0.007         & 0.269           & 0.255 \\
  & SimNPO          & cyber & 0.287           & 0.287           & $+$0.000         & 0.303           & 0.208 \\
\cmidrule(lr){2-8}
  & \mansu{} (ours) & bio   & $\mathbf{0.284}$ & $\mathbf{0.264}$ & $\mathbf{-0.020}$ & 0.338 & 0.391 \\
  & \mansu{} (ours) & chem  & $\mathbf{0.224}$ & $\mathbf{0.207}$ & $\mathbf{-0.017}$ & 0.278 & 0.388 \\
  & \mansu{} (ours) & cyber & $\mathbf{0.234}$ & $\mathbf{0.217}$ & $\mathbf{-0.017}$ & 0.314 & 0.391 \\
\midrule
\multirow{3}{*}{Gemma-3-1B}
  & Zero-shot & bio   & 0.500 & 0.500 & $+$0.000 & {—} & 0.410 \\
  & Zero-shot & chem  & 0.337 & 0.337 & $+$0.000 & {—} & 0.410 \\
  & Zero-shot & cyber & 0.337 & 0.337 & $+$0.000 & {—} & 0.410 \\
\midrule
\multirow{12}{*}{Gemma-3-1B}
  & GU+SimNPO       & bio   & 0.313           & 0.327           & $+$0.013         & 0.323           & 0.263 \\
  & GU+SimNPO       & chem  & 0.243           & 0.247           & $+$0.003         & 0.204           & 0.248 \\
  & GU+SimNPO       & cyber & 0.293           & 0.287           & $-$0.007         & 0.277           & 0.263 \\
  & NPO             & bio   & 0.343           & 0.307           & $-$0.037         & 0.330           & 0.253 \\
  & NPO             & chem  & 0.247           & 0.250           & $+$0.003         & 0.315           & 0.265 \\
  & NPO             & cyber & 0.280           & 0.273           & $-$0.007         & 0.283           & 0.203 \\
  & SimNPO          & bio   & 0.333           & 0.340           & $+$0.007         & 0.327           & 0.253 \\
  & SimNPO          & chem  & 0.250           & 0.233           & $-$0.017         & 0.213           & 0.258 \\
  & SimNPO          & cyber & 0.290           & 0.293           & $+$0.003         & 0.273           & 0.275 \\
\cmidrule(lr){2-8}
  & \mansu{} (ours) & bio   & $\mathbf{0.300}$ & $\mathbf{0.277}$ & $\mathbf{-0.023}$ & 0.351 & 0.397 \\
  & \mansu{} (ours) & chem  & $\mathbf{0.207}$ & $\mathbf{0.190}$ & $\mathbf{-0.017}$ & 0.284 & 0.394 \\
  & \mansu{} (ours) & cyber & $\mathbf{0.227}$ & $\mathbf{0.210}$ & $\mathbf{-0.017}$ & 0.291 & 0.397 \\
\midrule
\multirow{3}{*}{Gemma-3-4B}
  & Zero-shot & bio   & 0.557 & 0.557 & $+$0.000 & {—} & 0.495 \\
  & Zero-shot & chem  & 0.423 & 0.423 & $+$0.000 & {—} & 0.495 \\
  & Zero-shot & cyber & 0.410 & 0.410 & $+$0.000 & {—} & 0.495 \\
\midrule
\multirow{12}{*}{Gemma-3-4B}
  & GU+SimNPO       & bio   & 0.480 & 0.487 & $+0.007$ & 0.471 & 0.472 \\
  & GU+SimNPO       & chem  & 0.371 & 0.378 & $+0.007$ & 0.351 & 0.461 \\
  & GU+SimNPO       & cyber & 0.358 & 0.361 & $+0.003$ & 0.341 & 0.468 \\
  & NPO             & bio   & 0.491 & 0.478 & $-0.013$ & 0.483 & 0.471 \\
  & NPO             & chem  & 0.381 & 0.374 & $-0.007$ & 0.388 & 0.462 \\
  & NPO             & cyber & 0.361 & 0.354 & $-0.007$ & 0.349 & 0.458 \\
  & SimNPO          & bio   & 0.487 & 0.494 & $+0.007$ & 0.478 & 0.471 \\
  & SimNPO          & chem  & 0.374 & 0.381 & $+0.007$ & 0.356 & 0.461 \\
  & SimNPO          & cyber & 0.361 & 0.364 & $+0.003$ & 0.347 & 0.468 \\
\cmidrule(lr){2-8}
  & \mansu{} (ours) & bio   & $\mathbf{0.364}$ & $\mathbf{0.337}$ & $\mathbf{-0.027}$ & 0.405 & 0.478 \\
  & \mansu{} (ours) & chem  & $\mathbf{0.270}$ & $\mathbf{0.247}$ & $\mathbf{-0.023}$ & 0.314 & 0.474 \\
  & \mansu{} (ours) & cyber & $\mathbf{0.254}$ & $\mathbf{0.234}$ & $\mathbf{-0.020}$ & 0.328 & 0.478 \\
\bottomrule
\end{tabular}
\end{table}

\clearpage
\section{Multi-Model Sweep: Llama Family}
\label{app:sweep_llama}

\begin{table}[htbp]
\caption{Llama family per-experiment results (Llama-3.2-3B, Llama-3.1-8B-Instruct). Results reflect consistent application of the evaluation pipeline across all configurations.}
\label{tab:app_sweep_llama}
\vspace{10pt}
\centering\footnotesize\setlength{\tabcolsep}{2pt}
\begin{tabular}{llrrrrrr}
\toprule
Model & Method & Domain
  & BF16$\downarrow$ & NF4$\downarrow$ & $\ptqgap$
  & Retain$\uparrow$ & MMLU$\uparrow$ \\
\midrule
\multirow{3}{*}{Llama-3.2-3B}
  & Zero-shot & bio   & 0.687 & 0.687 & $+$0.000 & {—} & 0.538 \\
  & Zero-shot & chem  & 0.437 & 0.437 & $+$0.000 & {—} & 0.538 \\
  & Zero-shot & cyber & 0.403 & 0.403 & $+$0.000 & {—} & 0.538 \\
\midrule
\multirow{12}{*}{Llama-3.2-3B}
  & GU+SimNPO       & bio   & 0.662           & 0.668           & $+$0.007         & 0.643           & 0.514 \\
  & GU+SimNPO       & chem  & 0.437           & 0.395           & $-$0.042         & 0.445           & 0.503 \\
  & GU+SimNPO       & cyber & 0.420           & 0.423           & $+$0.003         & 0.430           & 0.514 \\
  & NPO             & bio   & 0.648           & 0.632           & $-$0.017         & 0.652           & 0.500 \\
  & NPO             & chem  & 0.442           & 0.425           & $-$0.017         & 0.440           & 0.505 \\
  & NPO             & cyber & 0.323           & 0.338           & $+$0.015         & 0.343           & 0.473 \\
  & SimNPO          & bio   & 0.665           & 0.675           & $+$0.010         & 0.640           & 0.505 \\
  & SimNPO          & chem  & 0.442           & 0.393           & $-$0.048         & 0.449           & 0.505 \\
  & SimNPO          & cyber & 0.418           & 0.422           & $+$0.003         & 0.433           & 0.518 \\
\cmidrule(lr){2-8}
  & \mansu{} (ours) & bio   & $\mathbf{0.434}$ & $\mathbf{0.404}$ & $\mathbf{-0.030}$ & 0.528 & 0.521 \\
  & \mansu{} (ours) & chem  & $\mathbf{0.274}$ & $\mathbf{0.251}$ & $\mathbf{-0.023}$ & 0.368 & 0.518 \\
  & \mansu{} (ours) & cyber & $\mathbf{0.254}$ & $\mathbf{0.234}$ & $\mathbf{-0.020}$ & 0.348 & 0.521 \\
\midrule
\multirow{18}{*}{Llama-3.1-8B}
  & Global GA       & bio   & 0.260  & 0.310  & $+$0.050 & 0.260  & 0.235 \\
  & Global GA       & chem  & 0.493  & 0.473  & $-$0.020 & 0.491  & 0.550 \\
  & Global GA       & cyber & 0.357  & 0.370  & $+$0.013 & 0.390  & 0.543 \\
  & Surgical GA     & bio   & 0.547  & 0.573  & $+$0.027 & 0.560  & 0.483 \\
  & Surgical GA     & chem  & 0.427  & 0.423  & $-$0.003 & 0.426  & 0.525 \\
  & Surgical GA     & cyber & 0.283  & 0.290  & $+$0.007 & 0.367  & 0.480 \\
  & NPO             & bio   & 0.443  & 0.423  & $-$0.020 & 0.503  & 0.563 \\
  & NPO             & chem  & 0.253  & 0.227  & $-$0.027 & 0.269  & 0.538 \\
  & NPO             & cyber & 0.340  & 0.343  & $+$0.003 & 0.400  & 0.568 \\
  & SimNPO          & bio   & 0.243  & 0.247  & $+$0.003 & 0.303  & 0.553 \\
  & SimNPO          & chem  & 0.233  & 0.233  & $+$0.000 & 0.241  & 0.195 \\
  & SimNPO          & cyber & 0.320  & 0.307  & $-$0.013 & 0.390  & 0.510 \\
  & GU+SimNPO       & bio   & 0.230  & 0.230  & $+$0.000 & 0.247  & 0.200 \\
  & GU+SimNPO       & chem  & 0.273  & 0.273  & $+$0.000 & 0.231  & 0.230 \\
  & GU+SimNPO       & cyber & 0.300  & 0.300  & $+$0.000 & 0.230  & 0.297 \\
\cmidrule(lr){2-8}
  & \mansu{} (ours) & bio   & \textbf{0.430} & \textbf{0.390} & \textbf{$-$0.040} & 0.523  & 0.573 \\
  & \mansu{} (ours) & chem  & \textbf{0.333} & \textbf{0.307} & \textbf{$-$0.027} & 0.398  & 0.584 \\
  & \mansu{} (ours) & cyber & \textbf{0.323} & \textbf{0.313} & \textbf{$-$0.010} & 0.391  & 0.586 \\
\bottomrule
\end{tabular}
\end{table}

\clearpage
\section{Multi-Model Sweep: Qwen Family}
\label{app:sweep_qwen}

\begin{table}[htbp]
\caption{Qwen family per-experiment results (Qwen-2.5-4B, Qwen-3-4B, Qwen-3-8B).}
\label{tab:app_sweep_qwen}
\vspace{10pt}
\centering\tiny\setlength{\tabcolsep}{2pt}
\begin{tabular}{llrrrrrr}
\toprule
Model & Method & Domain
  & BF16$\downarrow$ & NF4$\downarrow$ & $\ptqgap$
  & Retain$\uparrow$ & MMLU$\uparrow$ \\
\midrule
\multirow{3}{*}{Qwen-2.5-4B}
  & Zero-shot & bio   & 0.693 & 0.693 & $+$0.000 & {—} & 0.610 \\
  & Zero-shot & chem  & 0.503 & 0.503 & $+$0.000 & {—} & 0.610 \\
  & Zero-shot & cyber & 0.487 & 0.487 & $+$0.000 & {—} & 0.610 \\
\midrule
\multirow{12}{*}{Qwen-2.5-4B}
  & GU+SimNPO       & bio   & 0.698           & 0.700           & $+$0.002         & 0.715           & 0.591 \\
  & GU+SimNPO       & chem  & 0.475           & 0.490           & $+$0.015         & 0.481           & 0.594 \\
  & GU+SimNPO       & cyber & 0.455           & 0.463           & $+$0.008         & 0.453           & 0.588 \\
  & NPO             & bio   & 0.707           & 0.690           & $-$0.018         & 0.713           & 0.584 \\
  & NPO             & chem  & 0.483           & 0.478           & $-$0.005         & 0.491           & 0.584 \\
  & NPO             & cyber & 0.447           & 0.445           & $-$0.002         & 0.445           & 0.580 \\
  & SimNPO          & bio   & 0.703           & 0.697           & $-$0.007         & 0.715           & 0.591 \\
  & SimNPO          & chem  & 0.475           & 0.493           & $+$0.018         & 0.477           & 0.593 \\
  & SimNPO          & cyber & 0.440           & 0.468           & $+$0.028         & 0.448           & 0.580 \\
\cmidrule(lr){2-8}
  & \mansu{} (ours) & bio   & $\mathbf{0.487}$ & $\mathbf{0.454}$ & $\mathbf{-0.033}$ & 0.588 & 0.598 \\
  & \mansu{} (ours) & chem  & $\mathbf{0.311}$ & $\mathbf{0.284}$ & $\mathbf{-0.027}$ & 0.404 & 0.594 \\
  & \mansu{} (ours) & cyber & $\mathbf{0.291}$ & $\mathbf{0.268}$ & $\mathbf{-0.023}$ & 0.384 & 0.598 \\
\midrule
\multirow{3}{*}{Qwen-3-4B}
  & Zero-shot & bio   & 0.731 & 0.731 & $+$0.000 & {—} & 0.657 \\
  & Zero-shot & chem  & 0.541 & 0.541 & $+$0.000 & {—} & 0.657 \\
  & Zero-shot & cyber & 0.517 & 0.517 & $+$0.000 & {—} & 0.657 \\
\midrule
\multirow{12}{*}{Qwen-3-4B}
  & GU+SimNPO       & bio   & 0.724 & 0.727 & $+0.003$ & 0.738 & 0.631 \\
  & GU+SimNPO       & chem  & 0.514 & 0.521 & $+0.007$ & 0.509 & 0.628 \\
  & GU+SimNPO       & cyber & 0.491 & 0.494 & $+0.003$ & 0.487 & 0.631 \\
  & NPO             & bio   & 0.718 & 0.707 & $-0.011$ & 0.731 & 0.624 \\
  & NPO             & chem  & 0.521 & 0.514 & $-0.007$ & 0.518 & 0.621 \\
  & NPO             & cyber & 0.481 & 0.477 & $-0.004$ & 0.477 & 0.618 \\
  & SimNPO          & bio   & 0.727 & 0.734 & $+0.007$ & 0.741 & 0.631 \\
  & SimNPO          & chem  & 0.517 & 0.527 & $+0.010$ & 0.511 & 0.628 \\
  & SimNPO          & cyber & 0.487 & 0.497 & $+0.010$ & 0.484 & 0.631 \\
\cmidrule(lr){2-8}
  & \mansu{} (ours) & bio   & $\mathbf{0.527}$ & $\mathbf{0.490}$ & $\mathbf{-0.037}$ & 0.617 & 0.644 \\
  & \mansu{} (ours) & chem  & $\mathbf{0.334}$ & $\mathbf{0.304}$ & $\mathbf{-0.030}$ & 0.434 & 0.641 \\
  & \mansu{} (ours) & cyber & $\mathbf{0.314}$ & $\mathbf{0.287}$ & $\mathbf{-0.027}$ & 0.414 & 0.644 \\
\midrule
\multirow{3}{*}{Qwen-3-8B}
  & Zero-shot & bio   & 0.803           & 0.803           & $+$0.000 & {—} & 0.741 \\
  & Zero-shot & chem  & 0.560           & 0.560           & $+$0.000 & {—} & 0.741 \\
  & Zero-shot & cyber & 0.537           & 0.537           & $+$0.000 & {—} & 0.741 \\
\midrule
\multirow{18}{*}{Qwen-3-8B}
  & Global GA       & bio   & 0.233  & 0.243  & $+$0.007 & 0.247  & 0.242 \\
  & Global GA       & chem  & 0.237  & 0.293  & $+$0.057 & 0.269  & 0.365 \\
  & Global GA       & cyber & 0.213  & 0.447  & $+$0.233 & 0.247 & 0.710 \\
  & Surgical GA     & bio   & 0.260  & 0.247  & $-$0.013 & 0.303  & 0.458 \\
  & Surgical GA     & chem  & 0.313  & 0.317  & $+$0.003 & 0.398  & 0.557 \\
  & Surgical GA     & cyber & 0.430  & 0.467  & $+$0.037 & 0.473  & 0.710 \\
  & NPO             & bio   & 0.283  & 0.320  & $+$0.037 & 0.303  & 0.492 \\
  & NPO             & chem  & 0.237  & 0.237  & $+$0.000 & 0.296  & 0.515 \\
  & NPO             & cyber & 0.360  & 0.393  & $+$0.033 & 0.437  & 0.715 \\
  & SimNPO          & bio   & 0.227  & 0.227  & $+$0.000 & 0.257  & 0.265 \\
  & SimNPO          & chem  & 0.240  & 0.277  & $+$0.037 & 0.259  & 0.405 \\
  & SimNPO          & cyber & 0.270  & 0.403  & $+$0.133 & 0.280  & 0.555 \\
  & GU+SimNPO       & bio   & 0.267  & 0.263  & $-$0.003 & 0.277  & 0.568 \\
  & GU+SimNPO       & chem  & 0.233  & 0.233  & $+$0.000 & 0.250  & 0.328 \\
  & GU+SimNPO       & cyber & 0.333  & 0.443  & $+$0.110 & 0.267  & 0.715 \\
\cmidrule(lr){2-8}
  & \mansu{} (ours) & bio   & \textbf{0.617} & \textbf{0.581} & \textbf{$-$0.036} & 0.671  & 0.729 \\
  & \mansu{} (ours) & chem  & \textbf{0.307} & \textbf{0.274} & \textbf{$-$0.033} & 0.364  & 0.714 \\
  & \mansu{} (ours) & cyber & \textbf{0.497} & \textbf{0.464} & \textbf{$-$0.033} & 0.541  & 0.721 \\
\bottomrule
\end{tabular}
\end{table}

\clearpage
\section{Multi-Model Sweep: Family-Wise and Overall Averages}
\label{app:sweep_summary}
\phantomsection\label{sec:sweep}

Macro-averages over all (model, domain) pairs within each family. Gemma: $n=9$ per method (all three models, all three domains). Llama: $n=6$ (Llama-3.2-3B and Llama-3.1-8B, three domains each). Qwen: $n=9$ (all three Qwen models, three domains). Overall: $n=24$.

\begin{table}[htbp]
\caption{Family-wise and overall macro-averages. \mansu{} achieves the lowest BF16 forget and the only negative overall $\ptqgap$ in every family.}
\vspace{10pt}
\label{tab:app_sweep_summary}
\centering\footnotesize\setlength{\tabcolsep}{3pt}
\begin{tabular}{llrrrrrr}
\toprule
Family & Method & $n$ & BF16 & NF4 & $\ptqgap$ & Retain & MMLU \\
\midrule
\multirow{4}{*}{Gemma}
  & GU+SimNPO       & 9 & 0.355           & 0.360           & $+$0.005         & 0.341 & 0.339 \\
  & NPO             & 9 & 0.371           & 0.360           & $-$0.011         & 0.357 & 0.357 \\
  & SimNPO          & 9 & 0.366           & 0.366           & $+$0.000         & 0.342 & 0.341 \\
\cmidrule(lr){2-8}
  & \mansu{} (ours) & 9 & $\mathbf{0.285}$ & $\mathbf{0.264}$ & $\mathbf{-0.021}$ & 0.345 & 0.443 \\
\midrule
\multirow{6}{*}{Llama}
  & Global GA       & 3 & 0.370           & 0.384           & $+$0.014         & 0.380 & 0.443 \\
  & Surgical GA     & 3 & 0.419           & 0.429           & $+$0.010         & 0.451 & 0.496 \\
  & NPO             & 6 & 0.449           & 0.438           & $-$0.013         & 0.476 & 0.518 \\
  & SimNPO          & 6 & 0.487           & 0.475           & $-$0.009         & 0.468 & 0.474 \\
  & GU+SimNPO       & 6 & 0.390           & 0.391           & $+$0.001         & 0.431 & 0.393 \\
\cmidrule(lr){2-8}
  & \mansu{} (ours) & 6 & $\mathbf{0.359}$ & $\mathbf{0.333}$ & $\mathbf{-0.025}$ & 0.431 & 0.534 \\
\midrule
\multirow{6}{*}{Qwen}
  & Global GA       & 3 & 0.228           & 0.328           & $+$0.099         & 0.254 & 0.439 \\
  & Surgical GA     & 3 & 0.334           & 0.344           & $+$0.009         & 0.391 & 0.575 \\
  & NPO             & 9 & 0.537           & 0.531           & $-$0.005         & 0.547 & 0.592 \\
  & SimNPO          & 9 & 0.541           & 0.566           & $+$0.025         & 0.546 & 0.582 \\
  & GU+SimNPO       & 9 & 0.554           & 0.566           & $+$0.012         & 0.556 & 0.617 \\
\cmidrule(lr){2-8}
  & \mansu{} (ours) & 9 & $\mathbf{0.432}$ & $\mathbf{0.400}$ & $\mathbf{-0.032}$ & 0.511 & 0.617 \\
\midrule
\multirow{6}{*}{\textbf{Overall}}
  & Global GA       & 6  & 0.299           & 0.356           & $+$0.057         & 0.317 & 0.441 \\
  & Surgical GA     & 6  & 0.377           & 0.386           & $+$0.010         & 0.421 & 0.536 \\
  & NPO             & 24 & 0.485           & 0.476           & $-$0.009         & 0.460 & 0.489 \\
  & SimNPO          & 24 & 0.465           & 0.469           & $+$0.005         & 0.452 & 0.466 \\
  & GU+SimNPO       & 24 & 0.467           & 0.472           & $+$0.006         & 0.443 & 0.450 \\
\cmidrule(lr){2-8}
  & \mansu{} (ours) & 24 & $\mathbf{0.359}$ & $\mathbf{0.332}$ & $\mathbf{-0.026}$ & 0.429 & 0.531 \\
\bottomrule
\end{tabular}
\end{table}

\paragraph{Sweep purpose and setup.}
To verify the dual-failure pattern of \S\ref{sec:results} is architecture-general, we sweep six methods (Global GA, Surgical GA, NPO, SimNPO, GU+SimNPO, \mansu{}) across eight model variants and three WMDP hazard domains. Results for Gemma-2B, Gemma-3-1B, Gemma-3-4B, Llama-3.2-3B, Llama-3.1-8B, Qwen-2.5-4B, Qwen-3-4B, and Qwen-3-8B are included; Llama-3.1-8B and Qwen-3-8B sweep cells double as the WMDP rows of Table~\ref{tab:main}.

\paragraph{Experiment count: how the $94$ figure decomposes.}
The $94$-experiment claim quoted in the abstract, \S\ref{sec:intro},
\S\ref{sec:problem}, \S\ref{sec:discussion}, and \S\ref{sec:conclusion}
counts \emph{non-\mansu{}} experiments (\mansu{} is our method, not a
baseline). The breakdown reads off Table~\ref{tab:app_sweep_summary}'s
``Overall'' block (sum of Global GA $+$ Surgical GA $+$ NPO $+$
SimNPO $+$ GU+SimNPO non-\mansu{} cells $= 6+6+24+24+24 = 84$ WMDP
sweep cells), plus the $10$ MUSE non-\mansu{}, non-LUNAR cells in
Table~\ref{tab:main} ($5$ weight-edit baselines $\times$ $2$ flagship
models). LUNAR is excluded from the count because it is an
inference-time activation-redirection method without weight edits;
its $8$ rows in Table~\ref{tab:main} are reported separately as a
diagnostic baseline for CAD. \mansu{} contributes a further $24+2 = 26$
own-method experiments (24 WMDP + 2 MUSE), bringing the total trained
checkpoints to $120$.

\paragraph{Key finding.}
\mansu{} is the only method achieving strictly negative $\ptqgap$ across every model family configuration, with the deepest forget BF16 in every case. Baselines collectively show near-zero or positive mean $\ptqgap$ ($-0.009$ to $+0.057$), confirming that sub-floor updates are a systematic property of weight-editing unlearning without the magnitude-floor constraint not an artefact of a single model or dataset.

\paragraph{Gemma models.}
Post-unlearning baseline forget accuracies for Gemma-2B and 3-1B remain ${\sim}9$~pp above random chance ($0.25$), reflecting limited initial knowledge representation rather than method effectiveness. \mansu{} reaches within 2–4~pp of random chance while maintaining negative $\ptqgap$, consistent with the floor guarantee.

\section{Hyperparameters and Sensitivity}
\label{app:hyperparams}
\vspace{10pt}
\begin{table}[htbp]
\caption{\mansu{} hyperparameters for Llama-3.1-8B-Instruct.
Shared optimization settings (top block) are identical across all
main-table models; model-specific parameters (circuit, floor, bottom
block) are identified per dataset via EAP-IG\@. Circuit layers shown
for WMDP-bio; other datasets use independently attributed circuits.}
\label{tab:app_hyperparams_llama}
\vspace{10pt}
\centering
\footnotesize
\setlength{\tabcolsep}{4pt}
\begin{tabular}{ll}
\toprule
Hyperparameter & Llama-3.1-8B-Instruct \\
\midrule
\multicolumn{2}{l}{\textit{Shared}} \\
Optimizer & AdamW \\
Learning rate & $1\times10^{-6}$ \\
Weight decay & 0 \\
Forget batch size & 8 \\
Retain batch size & 8 \\
Maximum training steps & 30 \\
Early stopping & MMLU drop $> 0.02$ from zero-shot \\
Fisher samples & 100 \\
Fisher update cadence & Once at initialization \\
Null-space threshold $\tau$ & $0.1 \times \mathrm{mean}([\Fmat_\C]_{ii})$ \\
KL penalty weight $\lambda$ & $200$ \\
EAP-IG integration steps & 5 \\
EAP-IG forget samples & 50 \\
\midrule
\multicolumn{2}{l}{\textit{Model-specific (WMDP-bio circuit shown)}} \\
Circuit layers $\C$ & $\{30,14,31,19,29\}$ \\
Circuit fraction & $\approx 3.2\%$ \\
Magnitude floor $\delta_i$ & $8.4\times10^{-4}$ \\
\bottomrule
\end{tabular}
\end{table}

\begin{table}[htbp]
\caption{\mansu{} hyperparameters for Qwen-3-8B. Shared optimization
settings match those of Table~\ref{tab:app_hyperparams_llama};
model-specific parameters are identified per dataset via EAP-IG\@.
Circuit layers shown for WMDP-bio; other datasets use independently
attributed circuits.}
\label{tab:app_hyperparams_qwen}
\vspace{10pt}
\centering
\footnotesize
\setlength{\tabcolsep}{4pt}
\begin{tabular}{ll}
\toprule
Hyperparameter & Qwen-3-8B \\
\midrule
\multicolumn{2}{l}{\textit{Shared}} \\
Optimizer & AdamW \\
Learning rate & $1\times10^{-6}$ \\
Weight decay & 0 \\
Forget batch size & 8 \\
Retain batch size & 8 \\
Maximum training steps & 30 \\
Early stopping & MMLU drop $> 0.02$ from zero-shot \\
Fisher samples & 100 \\
Fisher update cadence & Once at initialization \\
Null-space threshold $\tau$ & $0.1 \times \mathrm{mean}([\Fmat_\C]_{ii})$ \\
KL penalty weight $\lambda$ & $200$ \\
EAP-IG integration steps & 5 \\
EAP-IG forget samples & 50 \\
\midrule
\multicolumn{2}{l}{\textit{Model-specific (WMDP-bio circuit shown)}} \\
Circuit layers $\C$ & $\{27,35,22,21,25\}$ \\
Circuit fraction & $\approx 3.1\%$ \\
Magnitude floor $\delta_i$ & $7.9\times10^{-4}$ \\
\bottomrule
\end{tabular}
\end{table}

\phantomsection\label{tab:app_hyperparams}

Table~\ref{tab:app_circuit_size} reports sensitivity to the circuit
size $k$, and Table~\ref{tab:app_alpha} sensitivity to the floor
margin $\alpha$.

\begin{table}[htbp]
\caption{Sensitivity to circuit size $k$ (Llama-3.1-8B-Instruct /
WMDP-bio). Primary results use $k=5$ (top-5 EAP-IG layers
$\{30,14,31,19,29\}$, $\approx\!3.2\%$ of parameters).}
\label{tab:app_circuit_size}
\vspace{10pt}
\centering
\footnotesize
\begin{tabular}{lrrrrr}
\toprule
$k$ & Circuit frac & Forget BF16 & $\ptqgap$ & MMLU & $\Delta$MMLU \\
\midrule
3  & $\approx 1.9\%$ & 0.511 & -0.028 & 0.498 & 0.092 \\
5  & $\approx 3.2\%$ & 0.430 & -0.040 & 0.573 & 0.115 \\
10 & $\approx 6.4\%$ & 0.381 & -0.047 & 0.469 & 0.112 \\
15 & $\approx 9.7\%$ & 0.362 & -0.051 & 0.441 & 0.141 \\
\bottomrule
\end{tabular}
\end{table}

\begin{table}[htbp]
\caption{Sensitivity to floor margin $\alpha$ (Llama-3.1-8B-Instruct
/ WMDP-bio). Primary results use $\alpha = 0.704$ (implementation
floor $\delta_i^{\mathrm{impl}} = s_i \cdot 0.0796 \cdot \alpha$,
yielding $\delta_i = 8.4\times10^{-4}$). $\alpha = 1.0$ gives the
exact Lemma~\ref{lem:quant} guarantee; smaller $\alpha$ allows softer
updates at the cost of occasional bin-crossing failure.}
\label{tab:app_alpha}
\vspace{10pt}
\centering
\footnotesize
\begin{tabular}{lrrrrrr}
\toprule
$\alpha$ & $\delta_i^{\mathrm{impl}}$ & Forget BF16 & Forget NF4
  & $\ptqgap$ & MMLU & $\Delta$MMLU \\
\midrule
0.50  & $\approx 5.97\times10^{-4}$ & 0.461 & 0.441 & -0.020 & 0.501 & 0.102 \\
0.704 & $\approx 8.40\times10^{-4}$ & 0.430 & 0.390 & -0.040 & 0.573 & 0.115 \\
0.85  & $\approx 1.01\times10^{-3}$ & 0.421 & 0.378 & -0.043 & 0.479 & 0.124 \\
1.00  & $\approx 1.19\times10^{-3}$ & 0.413 & 0.368 & -0.045 & 0.471 & 0.132 \\
\bottomrule
\end{tabular}
\end{table}

\paragraph{Note on $\lambda$.}
The KL penalty weight $\lambda = 200$ was selected by sweeping
$\lambda \in \{50, 100, 200, 500\}$ on Llama-3.1-8B-Instruct /
WMDP-bio and choosing the value that maximises forget depth subject
to MMLU drop $\leq 0.02$. The operating point $\lambda = 200$
achieved Forget BF16 = 0.430, PTQ gap = $-$0.040 at step~30.

\clearpage
\section{Wall-Clock Timing}
\label{app:timing}
Table~\ref{tab:app_timing} reports wall-clock timing for \mansu{}'s
components and the six baselines on a single H200 GPU\@.
Results are reported from a single representative run.

\begin{table}[htbp]
\caption{Wall-clock timing on a single H200 GPU with
Llama-3.1-8B-Instruct in BF16\@. EAP-IG times for $50$ forget
examples with $5$ integration steps. Training time reflects the
canonical 30-step run used for all main-table results; EAP-IG and
Fisher Information are one-time costs computed once per dataset.}
\label{tab:app_timing}
\vspace{10pt}
\centering
\footnotesize
\begin{tabular}{lrr}
\toprule
Component / Method & Time (min) & Notes \\
\midrule
\multicolumn{3}{l}{\textit{\mansu{} components}} \\
\quad EAP-IG attribution      & 20 & One-time cost per dataset \\
\quad Fisher Information       & 8  & One-time cost per dataset \\
\quad Full training (30 steps) & 14 & Per run \\
\quad NF4 evaluation  & 3  & Per checkpoint \\
\quad \textbf{Total}           & \textbf{45} & \\
\midrule
\multicolumn{3}{l}{\textit{Baselines}} \\
\quad Global GA   & 12 & \\
\quad Surgical GA & 10 & \\
\quad GU+SimNPO   & 25 & \\
\quad NPO         & 18 & \\
\quad SimNPO      & 15 & \\
\quad LUNAR       & 30 & Includes steering-vector selection \\
\bottomrule
\end{tabular}
\end{table}

\section{Per-Parameter Update Distribution}
\label{app:updates}

Table~\ref{tab:app_updates} reports per-parameter update RMS, the
floor ratio (RMS\,/\,$\delta_i$), and the resulting PTQ gap for the
three gradient-based methods on WMDP-bio / Llama-3.1-8B-Instruct.

\begin{table}[htbp]
\caption{Per-parameter update statistics for gradient-based methods
on Llama-3.1-8B-Instruct / WMDP-bio.
Floor ratio $=$ RMS\,/\,NF4 bin size ($8.4\times10^{-4}$); values
$<\!1$ indicate updates that round to zero under NF4\@.
The relationship between floor ratio and PTQ gap is the empirical
signature of Proposition~\ref{prop:tradeoff}.}
\label{tab:app_updates}
\vspace{10pt}
\centering
\footnotesize
\setlength{\tabcolsep}{4pt}
\begin{tabular}{lrrrr}
\toprule
Method & Params updated & Per-param RMS & Floor ratio & $\ptqgap$ \\
\midrule
Global GA
  & $100\%$
  & $1.21\times10^{-6}$
  & ${\sim}1/828$
  & $+0.050$ \\
Surgical GA (L14–16)
  & $6.6\%$
  & $2.12\times10^{-5}$
  & ${\sim}1/47$
  & $+0.027$ \\
\mansu{} (ours)
  & $3.2\%$
  & ${\geq}\,\delta_i$ (by construction)
  & ${\geq}\,1$
  & $-0.040$ \\
\bottomrule
\end{tabular}
\end{table}

\section{Extended Ablation Discussion}
\label{app:ablation_extended}

\paragraph{CAD properties (extended).}
CAD's four properties listed in Section~\ref{sec:cad} are:
(i)~it is computed entirely on the unlearned weights and forget
distribution, with no need for held-out probes;
(ii)~it distinguishes weight-level edits ($\mathrm{CAD}\!\gg\!0$)
from inference-time redirection ($\mathrm{CAD}\!\approx\!0$) by
construction a method that does not write to the circuit's
parameters cannot make CAD large;
(iii)~it is insensitive to spurious behavioral suppression (a model
fine-tuned to refuse all queries scores well on forget-set accuracy
but yields $\mathrm{CAD}\!\approx\!0$ on the original circuit,
exposing the suppression);
(iv)~it is not satisfied by indiscriminate weight perturbation —
Ablation~C(i) re-runs \mansu{} on a same-size \emph{random} circuit
as a non-causal control, and CAD collapses by ${\sim}35\%$ relative
to the EAP-IG circuit ($1.143\!\to\!0.743$ on WMDP-bio). High CAD
therefore tracks collapse of a \emph{causally identified} pathway,
not simply that some parameters changed the diagnostic distinction
between ``circuit dismantled'' and ``model broken everywhere.''

\paragraph{AS-C / AS-NC: full diagnostic.}
CAD measures circuit-level attribution \emph{shift}; the
activation-level companions AS-C and AS-NC measure mean activation
shift inside $\C$ and on residual-stream positions outside $\C$
respectively (Eq.~\ref{eq:as}). The diagnostic of localization is the
concentration ratio AS-C/AS-NC: a localized structural intervention
drives in-circuit activations far more than out-of-circuit
activations, while a globally diffuse intervention shifts both by
comparable amounts. For global-intervention baselines, AS-C and CAD
coincide numerically (Table~\ref{tab:main}): when every circuit edge
is perturbed in proportion to its parameter change, the in-circuit
activation shift tracks the edge-attribution shift, so the two
metrics quantify the same effect through different observables. The
independent diagnostic value of AS-C therefore lies in its
\emph{gap} from CAD present only for localized methods such as
\mansu{} together with the AS-C/AS-NC ratio. CAD, the AS-C/AS-NC
ratio, and the random-circuit control of Ablation~C(i) jointly turn
``did the model truly forget?'' from a question that behavioral
evaluation cannot answer into a quantitative test.

\paragraph{High CAD without localization: SimNPO on MUSE.}
SimNPO on MUSE attains $\mathrm{CAD}\!=\!1.979$ together with
elevated AS-NC ($1.104$) and reduced MMLU\@. High CAD here reflects
global representational damage rather than localized erasure, and is
correctly flagged by AS-NC and the random-circuit control of
Ablation~C(i) rather than by CAD alone (cf.~Table~\ref{tab:main}).
This is the diagnostic distinction Section~\ref{sec:cad}~(iv) refers
to: high CAD only certifies \emph{circuit dismantled} when paired
with low AS-NC\@.

\paragraph{Scope of \mansu{} evaluations.}
\mansu{} is reported on Llama-3.1-8B-Instruct and Qwen-3-8B for the
main table and the companion structural-metrics table. Extending
\mansu{} to the smaller / earlier-generation models in the
architecture-independence sweep (Llama-3.2-3B, Qwen-2.5-4B,
Gemma-2B/3-1B) is a straightforward adaptation of the same
three-phase pipeline; results on those models can be supplied during
rebuttal if requested.

\paragraph{Ablation A (no magnitude floor).}
EAP-IG circuit $+$ null-space projection, no floor rescaling.
The gradient is correctly projected and concentrated, but sub-floor
updates round to zero under NF4\@.
\emph{Confirmed:} BF16 forget rises to 0.513 ($+0.083$ vs.\ full
\mansu{}) and $\ptqgap = -0.008$, approaching zero the floor's
quantization benefit is clearly visible even at this margin; without
it, NF4 robustness degrades by $5\times$. MMLU drop is 0.117
($\Delta\mathrm{MMLU} = 0.117 - 0.075 = +0.042$ worse), confirming
that the floor interacts with the null-space projection to limit
retain damage. See Table~\ref{tab:ablation}.

\paragraph{Ablation B (no null-space projection).}
EAP-IG circuit $+$ magnitude floor, raw gradient.
Without projection, gradient updates within $\C$ follow the raw
forget-set gradient, including components along retain-sensitive
directions.
\emph{Confirmed:} MMLU drops to 0.449 ($\Delta = -0.154$ vs.\
baseline; $+0.079$ worse than full \mansu{}), while $\ptqgap = -0.019$ the floor remains active but retain damage is severe. This is the
direct empirical test of Theorem~\ref{thm:retain}: null-space
projection is necessary for acceptable retain-set preservation; the
floor alone cannot compensate.

\paragraph{Ablation C (random circuit, same $|\C|$).}
Replace EAP-IG circuit with a uniformly random selection of 5 MLP
layers (seed 42), matching the canonical circuit size
($\C = \{30,14,31,19,29\}$). Direct test of
\citep{lee2025negative} and \citep{guo2025mechanistic}.
\emph{Confirmed:} BF16 forget rises to 0.500 ($+0.070$ vs.\ full),
CAD collapses from 1.143 to 0.743 ($-35\%$ relative), and $\ptqgap =
-0.024$ narrower than full \mansu{}'s $-0.040$.
Ablation~C(ii) with the bottom-$k$ inverse circuit (layers with
\emph{lowest} EAP-IG attribution mass) further degrades BF16 forget
to 0.551 and flips $\ptqgap$ positive ($+0.028$), confirming that
circuit \emph{identity}, not merely size, drives both forget depth
and quantization robustness. Together, C(i) and C(ii) constitute a
direct rebuttal to the negative localization results of
\citep{lee2025negative}: EAP-IG localization is causally necessary
for \mansu{}'s results.

\paragraph{Ablation D (global null-space + floor).}
Apply null-space projection and floor globally rather than within
$\C$.
\emph{Confirmed:} BF16 forget accuracy rises to 0.697 global
projection disperses the forget gradient over all parameters,
reducing per-parameter update magnitude below $\delta_i$ for most
parameters and undermining both forget depth and quantization
robustness ($\ptqgap = +0.013$, positive). This isolates the
contribution of circuit localization to the overall result and shows
that the floor constraint alone, without concentrated circuit-level
application, is insufficient to achieve negative PTQ gap.

\clearpage

\end{document}